\def\eqref#1{equation~\ref{#1}}
\def\1{\bm{1}}
\def\vd{{\bm{d}}}
\def\vr{{\bm{r}}}
\def\vx{{\bm{x}}}
\def\vy{{\bm{y}}}
\def\evr{{r}}
\def\evx{{x}}
\def\evy{{y}}
\DeclareMathAlphabet{\mathsfit}{\encodingdefault}{\sfdefault}{m}{sl}
\SetMathAlphabet{\mathsfit}{bold}{\encodingdefault}{\sfdefault}{bx}{n}
\def\sF{{\mathbb{F}}}
\def\sX{{\mathbb{X}}}
\newtheorem{theorem}{Theorem}
\title{Mirage: Evaluating and Explaining Inductive Reasoning Process in Language Models}
\author{Jiachun Li\textsuperscript{1,2}, Pengfei Cao\textsuperscript{1,2},  Zhuoran Jin\textsuperscript{1,2}, Yubo Chen\textsuperscript{1,2,}\footnotemark[1] , Kang Liu\textsuperscript{1,2}, Jun Zhao\textsuperscript{1,2} \\ \textsuperscript{1}School of Artificial Intelligence, University of Chinese Academy of Sciences \\ \textsuperscript{2}The Key Laboratory of Cognition and Decision Intelligence for Complex Systems, \\ Institute of Automation, Chinese Academy of Sciences  \\
\footnotesize{\texttt{\{jiachun.li, pengfei.cao, zhuoran.jin, yubo.chen\}@nlpr.ia.ac.cn }}}
\begin{document}

\maketitle
\renewcommand{\thefootnote}{\fnsymbol{footnote}}
\footnotetext[1]{Corresponding authors.}
\renewcommand{\thefootnote}{\arabic{footnote}}
\begin{abstract}
Inductive reasoning is an essential capability for large language models (LLMs) to achieve higher intelligence, which requires the model to generalize rules from observed facts and then apply them to unseen examples. We present {\scshape Mirage}, a synthetic dataset that addresses the limitations of previous work, specifically the lack of comprehensive evaluation and flexible test data. In it, we evaluate LLMs' capabilities in both the inductive and deductive stages, allowing for flexible variation in input distribution, task scenario, and task difficulty to analyze the factors influencing LLMs' inductive reasoning. Based on these multi-faceted evaluations, we demonstrate that the LLM is a poor rule-based reasoner. In many cases, when conducting inductive reasoning, they do not rely on a correct rule to answer the unseen case. From the perspectives of different prompting methods, observation numbers, and task forms, models tend to conduct correct deduction without correct inductive rules consistently. Besides, we find that LLMs are good neighbor-based reasoners. In the inductive reasoning process, the model tends to focus on observed facts that are close to the current test example in feature space. By leveraging these similar examples, the model maintains strong inductive capabilities within a localized region, significantly improving its reasoning performance.

\end{abstract}

\section{Introduction}

Inductive reasoning, known as the ability of an intelligent agent to infer abstract rules from limited observations and apply them to new examples, is crucial for large language model (LLMs) \citep{ evalmodel_di, chinamodel_di, moss_mir} progressing toward artificial general intelligence (AGI) \citep{arc_analysis,itd,ind_search}. As illustrated in Figure \ref{fig:intro}, given a set of observed facts, inductive reasoning process expect the model to generate abstract rules from the provided facts (i.e. [A,B,C] $\rightarrow$ [B+C,B+C,C] in the rule induction task) and apply these rules to answer specific new questions (i.e. [3,4,7] $\rightarrow$ [11,11,7] in the example inference task). Despite its significant research value, it has been relatively neglected compared to other types of reasoning (e.g., math reasoning, multi-hop reasoning, etc.).
\begin{figure}[htbp]
\centering
\includegraphics[width=0.85\textwidth]{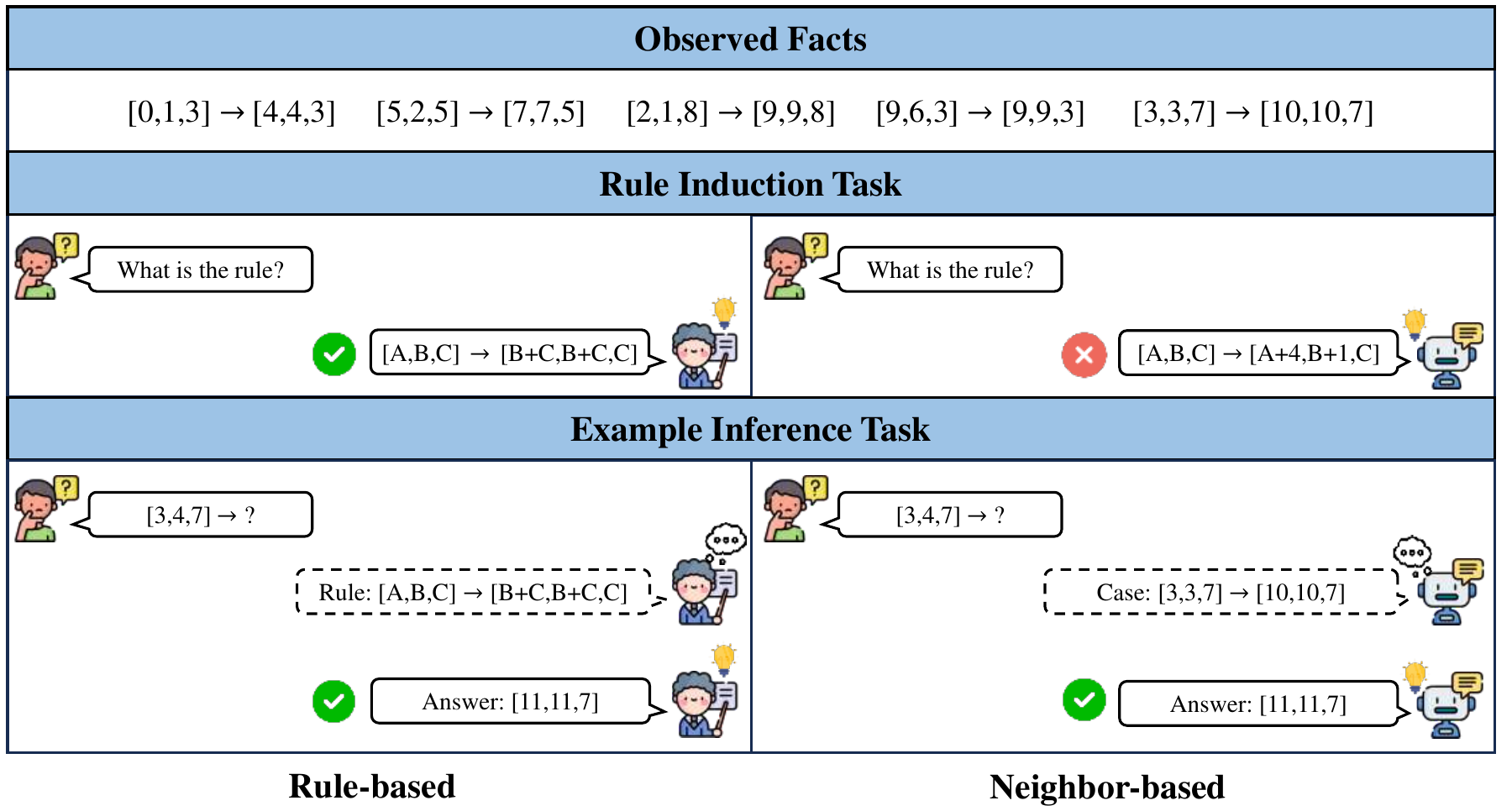}
\caption{An overview of two paradigms (i.e. rule-based and neighbor-based) in inductive reasoning.} \label{fig:intro}
\vspace{-3mm}
\end{figure}

Recently, some works have started to explore this problem. They primarily evaluate the model's inductive reasoning capabilities using various datasets \citep{code_ind, ind_or_ded, hr,marval}. Though they have made great progress, their works still have two main limitations: \textbf{(1) Previous works lack comprehensive evaluation.} Most works have only one evaluation task: the inductive task on collected rules \citep{llm_as_ind,code_ind,commonsense_mir} or the deductive task on specific test samples \citep{arc, 1d_arc, hr}.
Therefore, they can only evaluate the rule induction performance or final results of inductive reasoning, instead of comprehensively analyzing the whole process (i.e. inductive + deductive).
\textbf{(2) Previous works lack flexible test data.} Most former datasets evaluate the overall performance of models by collecting observation and test examples under the same rules \citep{listfunction, mini_arc, mini_scan}. However, due to the absence of transformation rules, it is impossible to extend these examples, resulting in a fixed test set. This limitation makes it challenging to assess the impact of factors such as distribution, quantity, and form of input examples on the model's inductive reasoning, thereby hindering a deeper analysis of the model's reasoning mechanisms.

In this paper, we present \textbf{{\scshape Mirage}} (\textbf{M}eta \textbf{I}nductive \textbf{R}e\textbf{A}sonin\textbf{G} \textbf{E}valuation), a dataset designed to address the two aforementioned limitations. It includes both inductive and deductive evaluation tasks, while offering flexibility to construct test data with various forms, arbitrary input distributions, and controllable difficulties.
In detail, we first construct a rule library based on various vector operations (e.g., [A,B,C] $\rightarrow$ [B+C,B+C,C] as shown in Figure \ref{fig:intro}). Using the automatically synthesized rules, we can generate facts arbitrarily through instantiation, ensuring the flexibility and scalability of the test data. Next, we filter out the noise data (e.g. duplicated facts) to further improve the effectiveness and quality of our dataset. Finally, to comprehensively evaluate the inductive reasoning process, we not only design inductive and deductive questions based on the synthesized data but also construct diverse application scenarios for these tasks, including list transformations, real-world problems, code generations, and string transformations (as shown in Figure \ref{fig:data}). 

Based on our dataset, we perform a deeper analysis of the model's inductive reasoning process, from which we draw two new conclusions about the inductive reasoning mechanisms of LLMs: \textbf{(1) Language models are poor rule-based reasoners.} As shown in the left column of Figure \ref{fig:intro}, in the rule-based reasoning paradigm, inductive reasoning involves first deriving the correct rule through the observation of examples and then using the inductive rule to answer new questions (like what humans do). However, we find that LLMs perform poorly in this paradigm: In many cases, though they can not induce a correct rule, they can still perform well on example inference tasks. Through experimentation, we observe this performance gap between induction and deduction across different prompting methods, models, observed example numbers, and scenarios. This indicates that the final performance of the LLM's inductive reasoning rarely relies on the intermediate inductive rules. \textbf{(2) Language models are good neighbor-based reasoners.} Furthermore, we identify an important mechanism behind LLM's inductive reasoning, which we refer to as ``\textbf{neighbor-based reasoning}": If some observed facts are close to the test examples in feature space, the model tends to leverage this similarity to improve inductive reasoning performance. For example, as shown in the right column of Figure \ref{fig:intro}, even when the model cannot generate the correct rule, it can rely on the neighbor fact [3,3,7] $\rightarrow$ [10,10,7] (here the distance between [3,3,7] and [3,4,7] is small, so we refer to them as neighbors) to successfully performs the reasoning. We demonstrate that this paradigm persists across different scenarios, models, and observed example numbers. However, it can only enhance the performance within a localized scope.

To sum up, the main contributions of our work are as follows: \textbf{(1) We present a new dataset {\scshape Mirage}}, through it, we can comprehensively evaluate the LLM's inductive reasoning process under more flexible settings. \textbf{ (2) We find that LLM is a poor rule-based inductive reasoner}. In many cases, it does not rely on inductive rules to make correct deductions. \textbf{(3) We demonstrate that LLM is a neighbor-based inductive reasoner}. When performing inductive reasoning, models rely on the neighbor facts in the observed fact set to get better performance. Our code is available at: 
\href{https://github.com/BugMakerzzz/mirage}{https://github.com/BugMakerzzz/mirage}.

\section{Data Construction}
In this section, we describe the whole pipeline to build {\scshape Mirage}.
We start by constructing rules based on five basic operations ($\S$\ref{sec:2.1}). Next, we substitute the instantiate vectors into the rules to generate facts ($\S$\ref{sec:2.2}) and apply filtering to them ($\S$\ref{sec:2.3}). Finally, we transform the facts into different scenarios, creating questions to evaluate the LLM's inductive reasoning performance ($\S$\ref{sec:2.4}).

\subsection{Rule Generation}\label{sec:2.1}
According to previous work and relevant definitions \citep{define1, define2}, in inductive reasoning, for each observed fact $\sX_k = (\vx, \vy )$, the input vector $\vx$ is transformed into the output vector $\vy$ according to a certain rule $f$, i.e.:
\footnotesize{
\begin{align}
    f(\vx) = \vy, \quad \forall (\vx, \vy) \in \sX
\end{align}}
where $\sX$ is the observed fact set under the rule $f$. Here $f$ is the core of the problem, as it allows us to generate facts for $\sX$ based on it automatically. Conversely, inferring $f$ from $\sX$ requires significantly more effort due to the vast range of possible rules. Therefore, we first consider automating these rules' large-scale synthesis.

Based on previous representative datasets \citep{arc, listfunction, 1d_arc}, we summarize the main types of rules, resulting in five atomic operations in this dataset: \textbf{(1) Add:} The operation adds certain components together. For example: $ [x, y, z] \rightarrow [x, x+y, z]$. \textbf{(2) Copy:} The operation copies some components to others. For example: $[x, y, z] \rightarrow [x, x, z]$. \textbf{(3) Map:} The operation applies a linear transformation to some components. For example: $[x, y, z] \rightarrow [x, ky + b, z]$. To avoid the interference of complex math calculations, we have $ k \in [1, 9]$ and $b \in [0, 9]$. \textbf{(4) Pad:} The operation fills certain components with constant values. For example: $[x, y, z] \rightarrow [x, c, c]$, where $c \in [0, 9]$. \textbf{(5) Swap:} The operation swaps certain components. For example: $[x, y, z] \rightarrow [z, y, x]$.

For each operation $O$, we randomly initialize the set index vector $\vd$ on which the operation applies and the index vector $\vr$ where the result is output. Specifically, for $\evx \in \vx$, $\evy \in \vy$:
\footnotesize{
\begin{align}
    \evy_j = \left\{
    \begin{array}{ll}
    [O(\evx_{\vd})]_i, & \text{if } j \in \vr \\
    \evx_j, & \text{if } j \notin \vr
    \end{array}
    \right.\label{eq:2}
\end{align} }
where $\evr_i = j$ and $[\cdot]_i$ represents the $i$-th component. Therefore, we can generate a meta-rule $f = (O, \vd, \vr)$. Through sampling $(O, \vd, \vr)$ randomly, we can construct a meta-rule library $\sF$.
\subsection{Fact Generation} \label{sec:2.2}
After generating the rule library, we can randomly initialize $\vx$, and apply a specific rule $f \in \sF$ to get $\vy$. We repeat this process to generate the fact set $\sX$ under the rule $f$. All the $(\vx, \vy) \in \sX$ are used for the LLM to induce the rule $f$. It is worth noting that we can control the inductive difficulty by adjusting two factors: the dimension $D$ of $\vx, \vy$  and the fact number $N$ of $\sX$. As an example, in Figure \ref{fig:intro}, $D$ is 3 and $N$ is 5. Empirically, a higher $D$ and a smaller $N$ tend to increase the task difficulty. Additionally, to avoid the interference of complex mathematical calculations in evaluating inductive reasoning ability, we restrict the elements in each $\vx$ to integers between 0 and 9.\footnote{Our pilot experiments indicate that, under these constraints, most of the models can achieve an accuracy of nearly 100\% in performing purely mathematical operations. See Appendix \ref{sec:a.2} for details.} Since we can synthesize any $D$-dimensional vector $\vx$ to construct a fact, we can flexibly control the input distribution.

\subsection{Data Filtering} \label{sec:2.3}
To ensure the quality of the dataset and the effectiveness of the evaluation, we need to filter out some noisy data. The following filtering steps are applied: \textbf{(1) Filtering out duplicate facts.} For any two facts in $\sX$, if their input vectors $\vx$ are identical, one of them is removed and resampled. This ensures that for each rule, all observed facts are unique. \textbf{(2) Filtering out duplicate rules.} To ensure diversity in the evaluation, we also remove duplicate rules, which have the same $(O, \vd, \vr)$. \textbf{(3) Filtering out trivial facts.} After random sampling, $\sX$ may include some trivial facts that provide little value for model induction, such as facts like $\vx = \vy$, $\vx = 0$, or $\vy = 0$. We filter the data to ensure that each $\sX$ contains at most one trivial fact, thereby limiting the noise that could affect the model's inductive reasoning process.

\begin{figure}[htbp]
\centering
\includegraphics[width=0.9\textwidth]{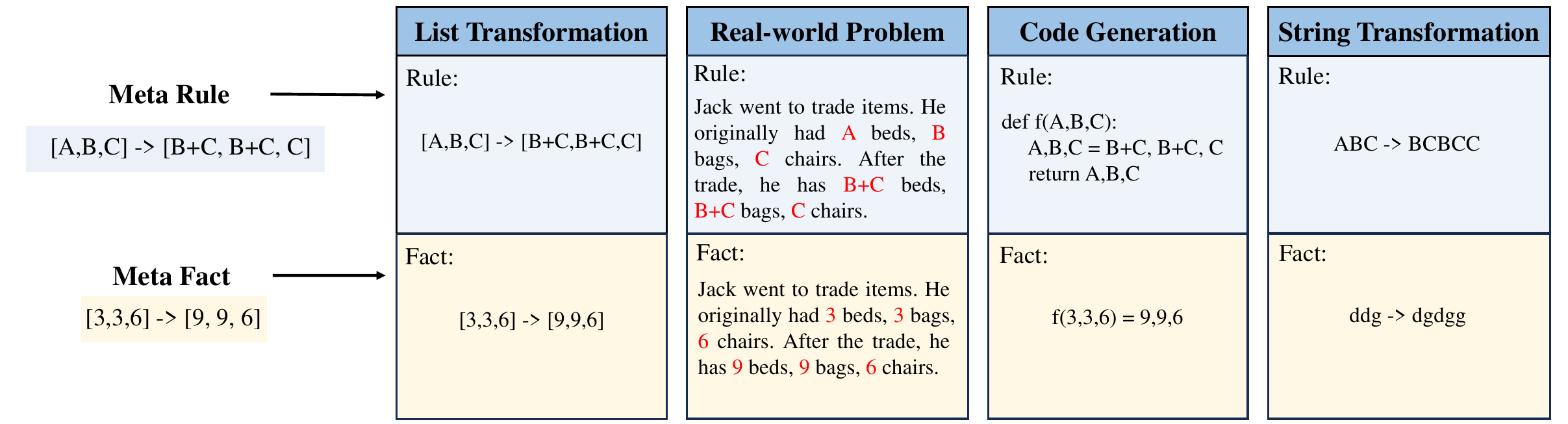}
\caption{Examples in four different scenarios of {\scshape Mirage}.} \label{fig:data}
\end{figure}

\subsection{Question Generation} \label{sec:2.4}
So far, we have constructed all the metadata that we need to generate specific questions. It is worth noting that both $\sF$ and $\sX$ contain only abstract rules and facts, without any specific context. Therefore, they represent the fundamental inductive reasoning test data, which is why we refer to them as meta-rules and meta-facts. As shown in Figure \ref{fig:data}, to evaluate the practical inductive reasoning capability of models, we apply these metadata to various scenarios to generate concrete problems. Specifically, we have: \textbf{(1) List Transformation (LT):} List transformation is the primary format used in previous inductive reasoning tasks \citep{listfunction, 1d_arc, arc}, and here we adopt this approach as well. We transform all fact vectors into one-dimensional lists and require the model to inductively infer the transformation rules applied to these lists. \textbf{(2) Real-world Problem (RP):} Previous datasets lack tests for inductive reasoning capabilities in real-world scenarios \citep{listfunction, 1d_arc, hr}.\footnote{Here, real-world scenarios refer to mathematical inductive reasoning within natural language contexts.} To mitigate this gap, we populate the metadata into different natural language templates across five real-life scenarios. The example in Figure \ref{fig:data} describes a trading scenario, where we use different items to represent different dimensions of the vector. All item transactions follow the same rule. \textbf{(3) Code Generation (CG):} For each fact, we use $\vx$ as the input and $\vy$ as the output of a function. The model is then tasked with predicting the corresponding Python function. \textbf{(4) String Transformation (ST):} The former three scenarios are related to numbers. Here, we replace the basic elements in the fact vectors with characters to conduct a new test. Notably, we modify the operations as follows: addition in the Add and Map operations is replaced with string concatenation, multiplication in Map is replaced with character replication, zero-padding in Pad becomes character deletion, and the numbers $0$-$9$ are replaced with the characters $a$-$j$.

For humans, although the process of reasoning tends to be implicit, according to the traditional definition in logic, it can be divided into two stages: induction and deduction. Here, we focus on the objectives of these two stages: deriving the correct rules through induction and making inferences on new instances using deduction. We aim to explore the correlation between the two during the reasoning process of LLMs. Therefore, for each scenario, we design two tasks: rule induction (RI) and example inference (EI), defined as follows:
\begin{itemize}
    \item \textbf{Rule Induction Task:} Given an observed fact set $\sX$, this task evaluates the model's \textbf{accuracy} in inducing transformation rules $f$. It aims to evaluate the model's proficiency in mastering intermediate rules during inductive reasoning \citep{llm_as_ind,code_ind}.
    \item \textbf{Example Inference Task:} Given an observed fact set $\sX$, the task provides an unseen example input $\vx_t$ as input and measures the \textbf{accuracy} of the predicted $\vy_t$. It aims to evaluate the final performance of the model's inductive reasoning \citep{arc, 1d_arc, hr}.
\end{itemize}
We provide all the prompts used for these tasks in Appendix \ref{sec:a.3}.

\section{Language Models are Poor Rule-based Reasoners}

\subsection{Overall Performances on Mirage}\label{sec:3.1}

\paragraph{Setup}
We first evaluate the overall performance of various LLMs on {\scshape Mirage}. Here, we select GPT-4 \citep{gpt-4}, GPT-4o, Claude-3.5, Llama3-8B \citep{llama3}, and Llama2-13B \citep{llama2} as representative models.\footnote{Due to the frequency limitations of API calls, we can not conduct our evaluation on the latest o1 model.} For the first three models, given their strong instruction-following capabilities, we provide only the instruction and allow them to answer the questions in a zero-shot setting. For the latter two models, to improve the format accuracy of the response, we additionally provide five examples before they answer the questions. Unless otherwise specified, we continue to use this setup to prompt the model in the subsequent experiments.
For the dataset setting, we fix the size $N$ at 5 and measure performance across four scenarios when the dimension $D$ = 3, 5, 8. We sample 500 questions for each test. More implementation details can be found in Appendix \ref{sec:b.1}.
\begin{table}[htbp]
\centering
\scalebox{0.8}{ 
\begin{tabular}{lccccccccccccc}
\toprule
\multirow{2}{*}{\textbf{Model}} & \multirow{2}{*}{\textbf{Task}} & \multicolumn{4}{c}{\textbf{D=3}} & \multicolumn{4}{c}{\textbf{D=5}} & \multicolumn{4}{c}{\textbf{D=8}}\\
\cmidrule(lr){3-6} \cmidrule(lr){7-10} \cmidrule(lr){11-14}
& & LT & RP & CG & ST & LT & RP & CG & ST & LT & RP & CG & ST \\
\midrule
\multirow{2}{*}{Llama2-13B} & RI & 0.01 & 0.00 & 0.00 & 0.03 & 0.01 & 0.01 & 0.00 & 0.21 & 0.00 & 0.01 & 0.00 & 0.10 \\
& EI & 0.26 & 0.11 & 0.25 & 0.22 & 0.13 & 0.03 & 0.14 & 0.25 & 0.06 & 0.01 & 0.06 & 0.19\\
\midrule
\multirow{2}{*}{Llama3-8B} & RI & 0.15 & 0.11 & 0.19 & 0.19 & 0.23 & 0.04 & 0.14 & 0.22 & 0.16 & 0.02 & 0.08 & 0.21 \\
& EI & 0.30 & 0.15 & 0.25 & 0.25 & 0.20 & 0.12 & 0.25 & 0.29 & 0.09 & 0.11 & 0.16 & 0.24\\
\midrule
\multirow{2}{*}{GPT-4o} & RI & 0.41 & 0.32 & 0.38 & 0.32 & 0.35 & 0.21 & 0.44 & 0.30 & 0.33 & \textbf{0.16} & 0.41 & 0.24\\
& EI & 0.68 & 0.37 & 0.61 & 0.56 & 0.58 & 0.25 & 0.64 & 0.39 & 0.42 & 0.17 & 0.49 & 0.29\\
\midrule
\multirow{2}{*}{GPT-4} & RI & \textbf{0.47} & 0.29 & \textbf{0.41} & 0.28 & \textbf{0.58} & \textbf{0.22} & \textbf{0.56} & 0.27 & \textbf{0.46} & 0.15 & \textbf{0.45} & 0.23\\
& EI &  0.68 & 0.37 & 0.61 & 0.57 & 0.63 & 0.29 & 0.71 & 0.44 & 0.42 & 0.21 & \uline{0.64} & \uline{0.30}\\
\midrule
\multirow{2}{*}{Claude-3.5} & RI & 0.44 & \textbf{0.35} & 0.34 & \textbf{0.46} & 0.22 & 0.20 & 0.38 & \textbf{0.33} & 0.24 & 0.13 & 0.38 & \textbf{0.26}\\
& EI & \uline{0.79} & \uline{0.45} & \uline{0.62} & \uline{0.58} & \uline{0.65} & \uline{0.33} & \uline{0.76} & \uline{0.45} & \uline{0.46} & \uline{0.24} & 0.59 & \uline{0.30}\\
\bottomrule
\end{tabular}
}
\caption{Overall performance (accuracy) of different models on {\scshape Mirage}. The best results in rule induction (RI) are in \textbf{bold}, while the best results in example inference (EI) are \uline{underlined}.}\label{tab:1}
\end{table}
\paragraph{Results} The results are shown in Table \ref{tab:1}, from which we can draw the following conclusions: \textbf{(1) LLMs' inductive reasoning does not rely on rule induction.} Given the same set of observed facts, the model's performance on rule induction is noticeably worse than on example inference in almost all cases. This suggests that most of the model's correct deductions do not depend on inducing a correct rule. \textbf{(2) LLMs face difficulties in handling inductive reasoning in real-world problems.} When comparing different scenarios, all models perform the worst on the RP tasks. For example, GPT-4o only achieves \textbf{0.16} and \textbf{0.17} accuracy when the dimension is 8. This indicates that, compared to purely symbolic forms (LT, CG, ST), natural language forms pose a greater challenge for the models' inductive reasoning abilities. 
\paragraph{Supplementary Experiments} In the main experiments, we find that there is a significant performance gap between rule induction and example inference for LLMs. However, this gap may be caused by the difference in difficulty between the two tasks. When the model is unable to perform correct inductive reasoning,
\begin{wrapfigure}{r}{0.6\textwidth}

\centering
\begin{tabular}{lcccccc}
\toprule
\multirow{2}{*}{\textbf{Model}}  & \multicolumn{3}{c}{\textbf{Rule Induction}} & \multicolumn{3}{c}{\textbf{Example Inference}} \\
\cmidrule(lr){2-4} \cmidrule(lr){5-7} 
 & BF & AF & CR & BF & AF & CR \\ 
 \midrule
GPT-4o & 0.50 & 0.13 & 0.74 & 0.66 & 0.15 & 0.77\\
Claude-3.5 & 0.37 & 0.07 & 0.81 &  0.65 & 0.22 & 0.66\\
\bottomrule
\end{tabular}
\captionof{table}{Comparison of CR on two tasks ($D$ = 3, $N$ = 3). BF and AF indicate the accuracy before and after perturbation. } \label{tab:3.1_sup}
\end{wrapfigure}
 it is likely to guess the correct answers for the EI task more easily compared to the RI task, resulting in a higher accuracy. We conduct this additional experiment to eliminate the interference of this factor. Specifically, we randomly perturb one fact in $\sX$ to violate rule $f$. Then, we observe the performance of both tasks and calculate the change rate (CR) of accuracy before and after the perturbation. CR represents the sensitivity of the model’s performance to the input. If CR is high, it indicates a strong correlation between task performance and input, making it difficult to answer the question correctly through random guessing, Therefore, CR can serve as an indicator of the reasoning difficulty for the task. We randomly choose 100 pieces of test data from the dataset and generate questions under the LF scenario.\footnote{Unless otherwise specified, this configuration will be maintained for all subsequent experiments.} The experimental results on different models are demonstrated in Table \ref{tab:3.1_sup}. We can observe that the two tasks have comparable CR for both models, indicating that the reasoning difficulty of the EI task is not lower than the RI task. The tasks themselves do not cause such a large performance gap.

\subsection{Performances of Advanced Methods} \label{sec:3.2}
In $\S$\ref{sec:3.1}, we observe that LLMs perform poorly on our dataset, especially in rule induction tasks. Considering previous work has proposed numerous methods to elicit the model's reasoning abilities \citep{cot,sc,sr}, we wonder whether they can boost models' performance on {\scshape Mirage}. 
\paragraph{Setup} Since we focus on exploring the model's intrinsic capabilities, we only consider methods that do not introduce any external tools or knowledge. Specifically, the methods are as follows:
\textbf{Input-Output (IO):} We prompt models to generate answers directly under different shots.
\textbf{Inductive-Deductive (ID):} We prompt models to generate rules for RI and apply them to answer questions in EI.
\textbf{Chain-of-Thought (CoT)} \citep{cot}: We prompt models to generate rationales and answers for the two tasks.
\textbf{Self-Consistency (SC)} \citep{sc}: Based on CoT, we sample $n$ rationales and use the major voting strategy to predict the final answer.
\textbf{Self-Refine (SR)} \citep{sr}: We prompt the model to provide feedback on the generated rules, and then refine the rules based on that feedback (with a maximum of $t$ iterations). After the iteration stops, we use the latest rule to answer the RI and apply it to answer the EI.
\textbf{Hypothesis Refinement (HR)} \citep{hr}: HR is an optimized version of SR, which first generates $n$ rules. In each iteration, we apply the current rules to all observed examples, compare the actual output with the expected output, and get the number of correct predictions along with the information about incorrect examples. If a candidate rule is correct for all observed facts, it is immediately returned. Otherwise, the rule with the highest number of correct predictions and the associated error information is used as input for the model to refine, generating $n$ rules for the next round, until the maximum number of iterations $t$ is reached. We sample 200 questions for each test.

\begin{table}[tbp]
\centering
\scalebox{0.8}{ 
\begin{tabular}{lcccccccccccc}
\toprule
\multirow{2}{*}{\textbf{Method}}  & \multicolumn{3}{c}{\textbf{LT}} & \multicolumn{3}{c}{\textbf{RP}} & \multicolumn{3}{c}{\textbf{CG}}  & \multicolumn{3}{c}{\textbf{ST}}\\
\cmidrule(lr){2-4} \cmidrule(lr){5-7} \cmidrule(lr){8-10} \cmidrule(lr){11-13}
 & RI & EI & ($\Delta$) & RI & EI & ($\Delta$) & RI & EI & ($\Delta$) & RI & EI & ($\Delta$) \\
\midrule
IO \footnotesize{(0-shot)} &  0.46 & \uline{0.76} & \textbf{0.30} & 0.43 & 0.72 & \textbf{0.28} & 0.39 & 0.46 & 0.08 & 0.47 & 0.70 & 0.23\\
IO \footnotesize{(5-shot)} &  \uline{0.63} & \uline{0.76} & 0.13 & \textbf{0.59} & \uline{0.77} & 0.17 & \textbf{0.55} & \uline{0.54} & -0.02 & 0.52 & \textbf{0.78} & 0.26\\
\midrule
ID \footnotesize{(0-shot)} &0.46 & 0.56 & 0.11 & 0.46 & 0.57 & 0.11 & 0.33 & 0.42 & 0.09 & 0.22 & 0.65 & \textbf{0.43}\\
ID \footnotesize{(5-shot)} & 0.59 & 0.68 & 0.08 & \uline{0.57} & 0.66 & 0.08 & \uline{0.47} & \uline{0.54} & 0.08 & 0.48 & 0.69 & 0.21 \\
\midrule
CoT \footnotesize{(0-shot)} & 0.50 & 0.57 & 0.07 & 0.47 & 0.55 & 0.08 & 0.34 & 0.39 & 0.05 & 0.52 & 0.62 & 0.10 \\    
CoT \footnotesize{(5-shot)} &   0.56 & 0.59 & 0.04 & 0.41 & 0.55 & 0.13 & 0.37 & 0.40 & 0.03 & 0.45 & 0.64 & 0.20\\
\midrule
SC \footnotesize{(n=5)} & 0.59 & 0.74 & 0.15 & 0.49 & 0.62 & 0.14 & 0.38 & 0.45 & 0.07 & \textbf{0.57} & 0.68 & 0.10\\
\midrule
SR \footnotesize{(t=3)}  & 0.48 & 0.64 & \uline{0.16} & 0.42 & 0.67 & 0.25 & 0.36 & 0.49 & \textbf{0.13} & \uline{0.53} & 0.67 & 0.14\\
\midrule
HR \footnotesize{(t=3, n=1)}  &  0.56 & 0.68 & 0.12 & 0.45 & 0.71 & \uline{0.26} & 0.41 & 0.53 & \uline{0.11} & 0.43 & \uline{0.71} & \uline{0.27}\\
HR \footnotesize{(t=3, n=5)}  & \textbf{0.66} & \textbf{0.79} & 0.13 & \textbf{0.59} & \textbf{0.80} & 0.21 & \textbf{0.55} & \textbf{0.60} & 0.05 & 0.49 & 0.67 & 0.18\\
\bottomrule
\end{tabular}
}
\caption{Performance of different methods on {\scshape Mirage} using GPT-4o. The best results in each column are highlighted in \textbf{bold}, while the second best results are \uline{underlined}.} \label{tab:3.2}
\end{table}
\paragraph{Results}
We illustrate the main results on GPT-4o in Table \ref{tab:3.2} (more results in Appendix \ref{sec:b.2}), from which we can conclude that: \textbf{(1) Advanced methods provide limited improvement to the model's inductive reasoning ability or may even have negative effects.} For both tasks, directly answering with few-shot settings can consistently achieve the highest or second-highest accuracy in most cases. After applying methods like CoT, the model's accuracy decreases by up to \textbf{18\%} and \textbf{22\%} on two tasks, respectively. It indicates that the key to optimizing inductive reasoning does not lie in refining the intermediate inductive process (as CoT-like methods do). \textbf{(2) The model's disregard for abstract rules during inductive reasoning is method-agnostic.} Although some methods use instructions to guide the model to focus more on the induced rules during reasoning (e.g. ID, SR, HR), there remains a significant gap in the model's RI and EI performance. For example, in the case of SR, the model's example inference accuracy outperforms its rule induction accuracy by an average of \textbf{16\%}.

\subsection{Impact of Increasing Fact Size}
In the previous experiments, we consistently fix the observed fact numbers $N$. Therefore, as a supplement, we explore the impact of $N$ on the model's inductive reasoning process in this section. Theoretically, as the number of observed facts increases, the scope of the candidate rules narrows, which can lead to the incorrect inductive process becoming correct.
If the reasoning process is rule-based, the model is likely to generate the correct rule (inductive) before applying it correctly (deductive). In other words, the time when the LLM induces the correct rule is no later than the time it performs the correct deduction. Thus, the cumulative number of observations required for the inductive rule to change from incorrect to correct should not exceed the number required for the test case to become correct. We design this experiment to validate whether it holds on the LLM.

\paragraph{Setup} Given a fact set $\sX^k$ of size $k$, and a fixed test input $\vx_t$, we define the inductive correction threshold (ICT) and deductive correction threshold (DCT) as follows:
\footnotesize{
\begin{align}
    \text{ICT} = k \iff & \forall i < k, \ \mathcal{M}(\sX^i|I) \neq f \land \mathcal{M}(\sX^k|I) = f \\
     \text{DCT} = k \iff & \forall i < k, \ \mathcal{M}(\sX^i, \vx_t|D) \neq f(\vx_t) \land \mathcal{M}(\sX^k, \vx_t|D) = f(\vx_t)
\end{align}}
Here, $\mathcal{M}(\cdot|I), \mathcal{M}(\cdot|D)$ are the model's outputs in RI and EI tasks. We set $D$ = 5 and vary $N$, then analyze the distribution of these two thresholds across 100 samples, reporting the results in Figure \ref{fig:3.3}.

\begin{figure}[tbp]
    \centering
    \begin{minipage}{0.495\textwidth}
        \centering
        \begin{subfigure}[t]{.492\linewidth}
            \centering
        \includegraphics[width=\linewidth]{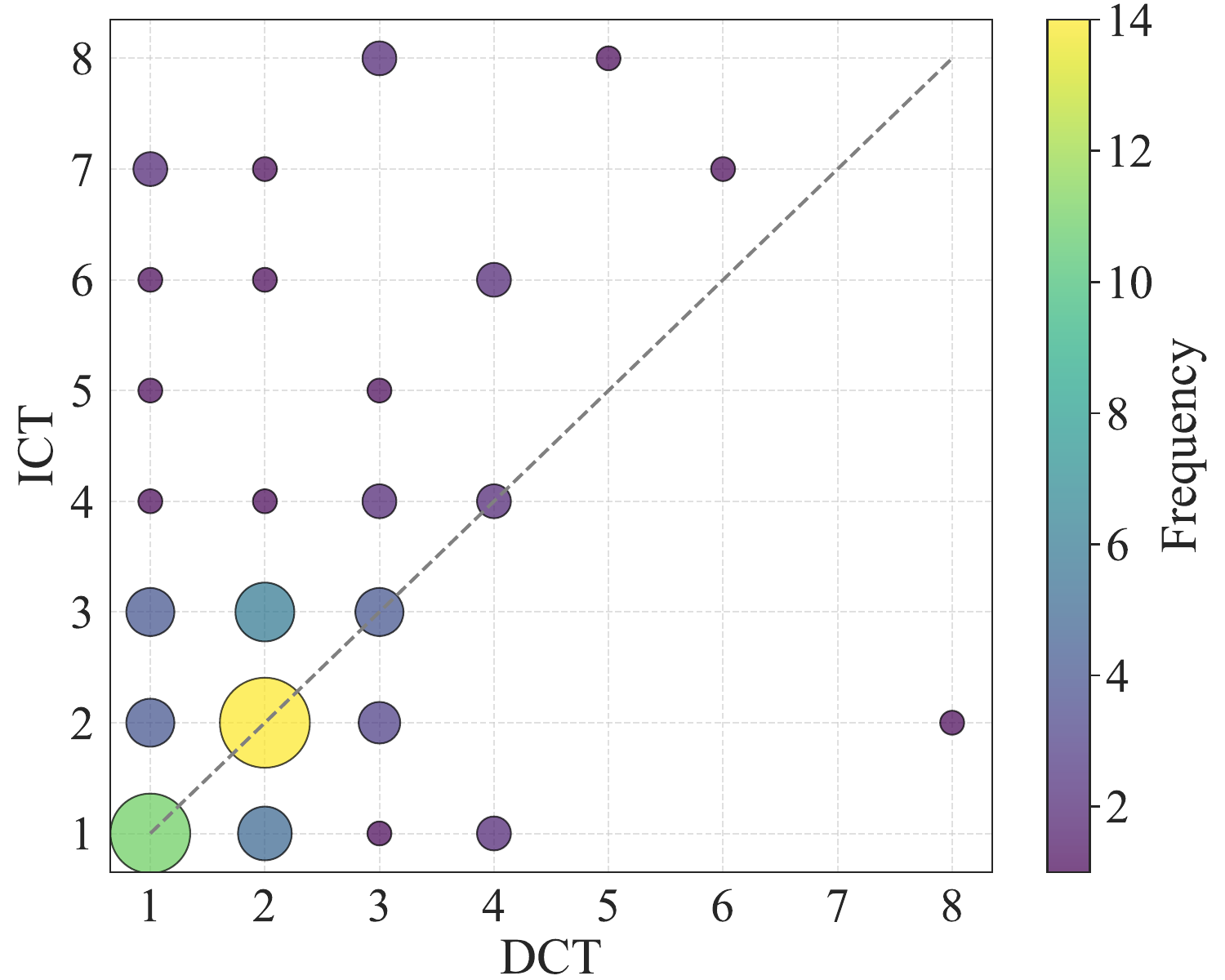}
            \caption{GPT-4o}
        \end{subfigure}
        \begin{subfigure}[t]{.492\linewidth}
            \centering
        \includegraphics[width=\linewidth]{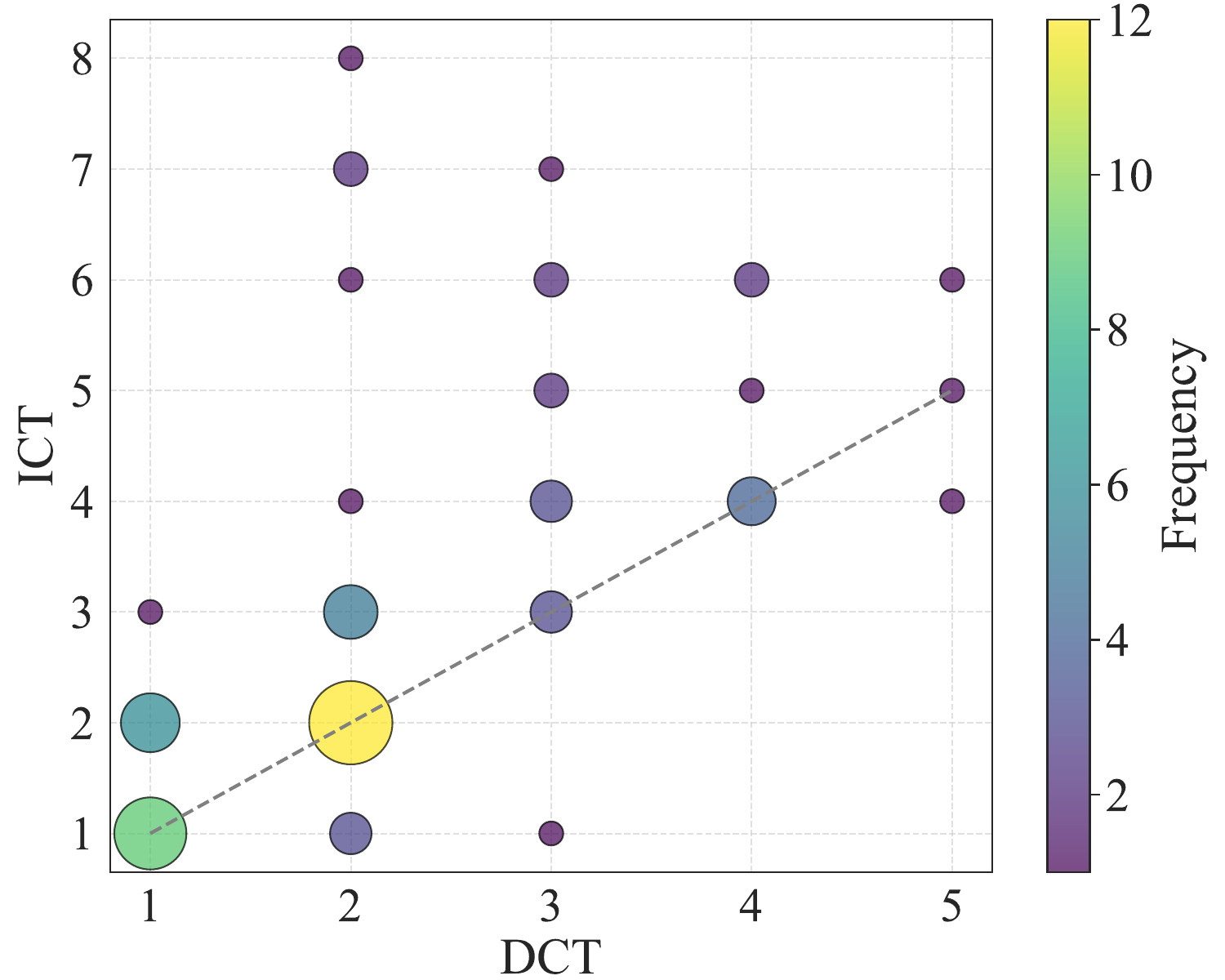}
            \caption{Claude-3.5}
        \end{subfigure}
        \\
        \caption{The distribution of ICT and DCT for the examples across different models.}
        \label{fig:3.3}
    \end{minipage}
    \hfill
    \begin{minipage}{0.495\textwidth}
        \centering
        \begin{subfigure}[t]{.45\linewidth}
            \centering
    	\includegraphics[width=\linewidth]{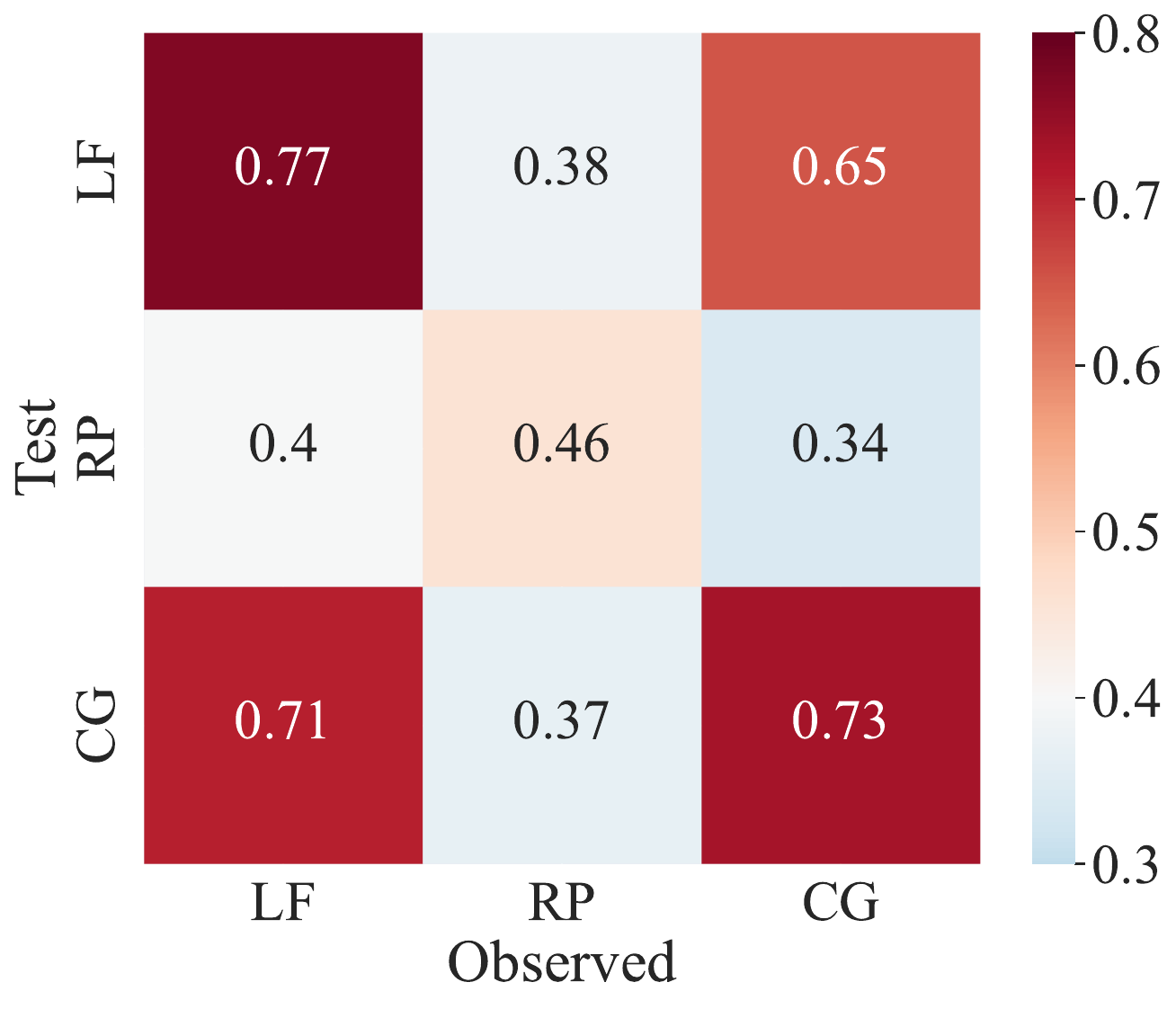}
            \caption{GPT-4o}
        \end{subfigure}
        \begin{subfigure}[t]{.45\linewidth}
            \centering
    	\includegraphics[width=\linewidth]{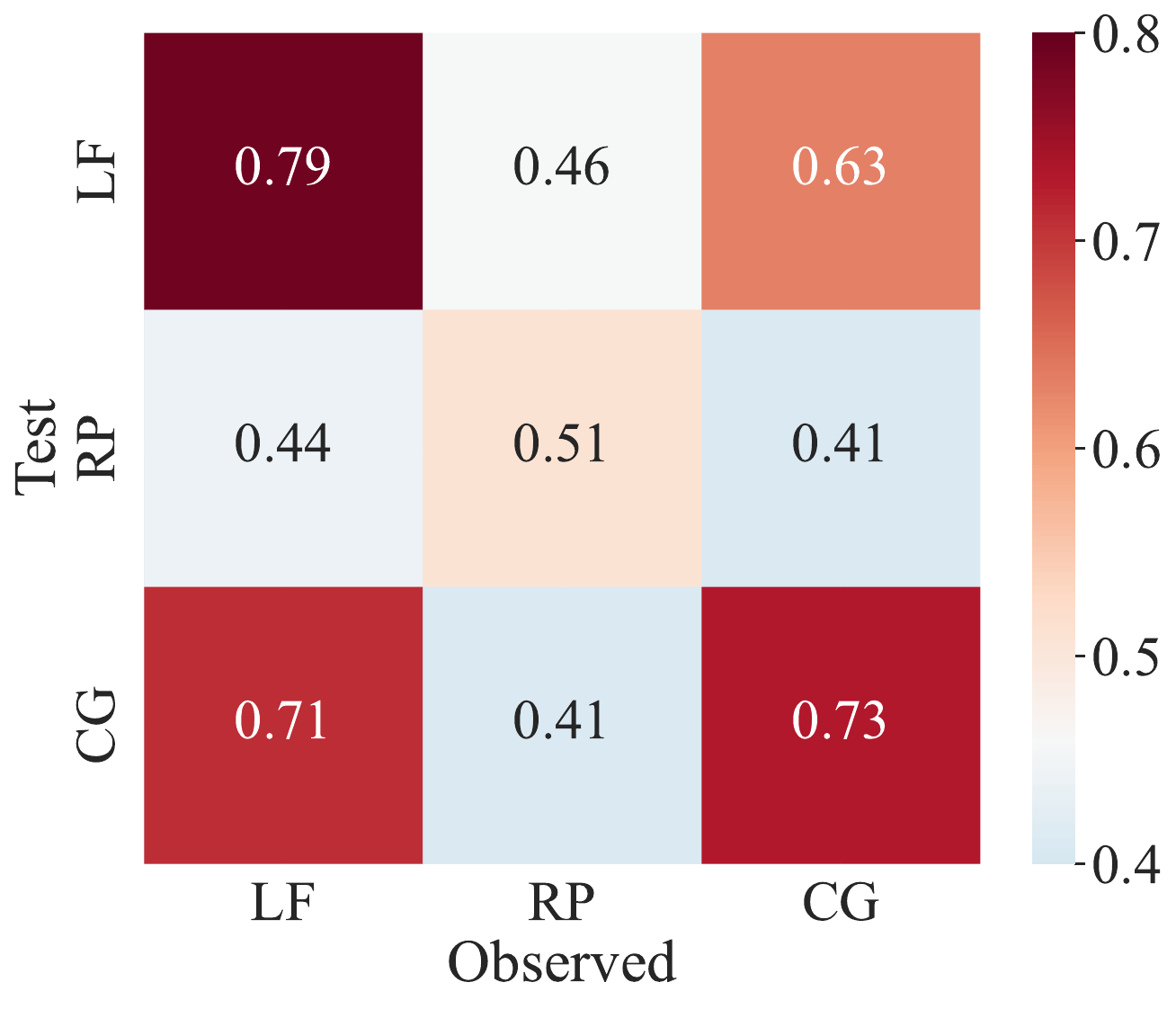}
            \caption{Claude-3.5}
        \end{subfigure}
        \\
        \caption{Performance on EI tasks under different scenarios of observed and test facts.}
        \label{fig:3.4}
    \end{minipage}
\end{figure}

\paragraph{Results}
Based on the result, we further demonstrate that \textbf{LLM's deduction does not rely on an inductive rule}. From both of the two figures, we can observe that most points are distributed in the upper left region of the line $x=y$, indicating that for the vast majority of cases, DCT is smaller than ICT. Therefore, the fact numbers $N$ does not affect the conclusion we stated earlier. LLM requires fewer facts to successfully perform an example inference task compared to correct induction.

\subsection{Transferability Test of Inductive Rules} \label{sec:3.4}
Finally, we investigate the impact of different scenarios on the inductive reasoning process. For rule-based reasoning, once a rule is formed through induction, it should be transferable. That is, a rule induced in one scenario should be applicable to another scenario with the same underlying transformation. We experiment to explore whether LLMs possess this ability when performing inductive reasoning. 

\paragraph{Setup} Specifically, we exclude ST in this experiment since its basic transformations differ from the other three scenarios (see $\S$\ref{sec:2.1}). For the remaining three scenarios, we generate the observed facts in one scenario, and then transform the test case into another scenario. Since our dataset can generate questions in different scenarios based on the same meta-rule, we can easily ensure that they share the same underlying transformation.

\paragraph{Results} From the results shown in Figure \ref{fig:3.4}, we can get that: \textbf{(1) LLMs lack transferability in inductive reasoning.} Across different cases, the highest performance occurs when the scenarios of the observed and test facts are consistent (i.e., the diagonal from the top left to the bottom right in the figure).\textbf{(2) The inductive reasoning process of the LLM is form-related.} Compared to the transfer between LT and RP (or CG and RP), the transfer between LT and CG demonstrates better performance. We infer that this is because the forms of LT and CG are more similar (see Figure \ref{fig:data}). 
In addition, we also design experiments under the fine-tuning paradigm to compare the model's transferability  (presented in Appendix \ref{sec:b.3}), and the results remain consistent.
Based on the above two observations, we further confirm that LLMs do not rely on abstract rules when performing inductive reasoning. So, what is the underlying mechanism behind it? In the following section, we focus on addressing this question.

\section{Language Models are Good Neighbor-based Reasoners}

\subsection{Motivation}
From $\S$ \ref{sec:3.4}, we know that closer forms between the observed facts and the test case can enable the model to perform inductive reasoning more effectively.  However, is the positive impact brought by the similarity limited only to the form? The answer to this question is likely ``No". Upon reviewing related works, we find that models tend to match various similar patterns in the context and use them to predict the next token \citep{induction_head, label_anchor, case_math}. Therefore, we aim to identify a metric to measure some other similarities between the observed facts and the test input. Since all of our facts are transformed from vectors, we associate this similarity with the distance between these facts in feature space.

In topology, if $f: X \rightarrow Y$ is a continuous function between two Euclidean spaces and $\mathcal{N}(\vx_0, \epsilon)$ is a $\epsilon$-neighborhood of the point $\vx_0$ in $X$, then we have:
\footnotesize{
\begin{align}
   \exists \eta > 0,  \  \text{s.t.} \  \forall \vx \in \mathcal{N}(\vx_0, \epsilon), \ f(\vx) \in \mathcal{N}(f(\vx_0), \eta)
\end{align}}
 In other words, continuous functions preserve the neighborhood property. If a fact input vector $\vx$ closes to the test input $\vx_t$, then their output vectors $\vy$ and $\vy_t$ will remain close.\footnote{We rigorously prove in Appendix \ref{sec:c.1} that the rules $f$ in our dataset are all continuous functions.} 
 Therefore, \textbf{the close distance between $\vy_t$ and $\vy$ may allow LLM to predict $y_t$ based on $y$ in observed facts without the need for correct rule generation}. In the following sections, we demonstrate through experiments that the model's inductive reasoning relies on this paradigm, which we refer to as \textbf{neighbor-based reasoning}.

\subsection{Neighbor Facts in Inductive Reasoning} \label{sec:4.2}
Before conducting the experiments, we first define some key concepts in our work: the distance $d$ and neighborhood $\mathcal{N}$.
In our setup, the components at corresponding positions in the vectors follow the same transformation rules, while non-corresponding components may undergo different transformations (see Equation \ref{eq:2}). Hence, we consider using the distance based on the corresponding components: Chebyshev distance (further discussion in Appendix \ref{sec:c.2}).
Given observed fact $\sX_i$ = $(\vx_i, \vy_i)$ and test input $\vx_t$, we have:
\footnotesize{
\begin{align}
    d(\sX_i, \vx_t) = \max_{k} \left( |\evx_{ik} - \evx_{tk}| \right) \label{eq:6}
\end{align}}
where $\evx_{ik}$ and $\evx_{tk}$ are the $k$-th component of two input vectors. Then we can define the $\epsilon$-neighborhood of $\vx_t$ based on the distance:
\footnotesize{
\begin{align}
    \mathcal{N}(\vx_t, \epsilon) =   \{ \sX_i \mid d (\vx_i - \vx_{t})  \leq \epsilon \} \label{eq:7}
\end{align}}
\paragraph{Setup} According above definitions, we can divide an observed fact $\sX_k$ into three categories based on the distance between $\vx$ and the test input $\vx_t$: \textbf{(1) In-neighborhood Fact (IF):} If $\sX_k \in  \mathcal{N}(\vx_t, \epsilon)$, we call $\sX_k$ is a in-neighborhood fact. \textbf{(2) Cross-neighborhood Fact (CF):}   If $\sX_k \notin  \mathcal{N}(\vx_t, \epsilon)$, but $\exists i \in [1,D], \ \text{s.t.} \ |\evx_i - \evx_{ti}| \leq \epsilon$, we consider it a suboptimal neighbor fact because some of its components can still contribute to the model's inductive reasoning process. In this case, we call $\sX_k$ is a cross-neighborhood fact. \textbf{(3) Out-neighborhood Fact (OF):} If $\forall i \in [1, D], \ |\evx_i - \evx_{ti}| > \epsilon$, we call $\sX_k$ is an out-neighborhood fact.
By generating examples based on these definitions, we can make the fact set $\sX$ contain only one type of fact. After constructing different fact sets, we compare the model's performance on EI tasks under these settings. Besides, we use the performance under the default fact set as the baseline, where all facts are randomly sampled without any constraints.

\paragraph{Results} 
We report the results in Figure \ref{fig:4.1}. It demonstrates that: \textbf{(1) LLM's inductive reasoning is neighbor-based.} By comparing three settings, we find that observed facts closer to the test case result in better performance (IF $>$ CF $>$ OF) across all models. Besides, compared to the baseline (the dashed line in figures), the accuracy significantly drops after removing the neighbor cases in $\sX$ (i.e. OF). These phenomena indicate that the model heavily relies on neighbor facts during reasoning. \textbf{(2) LLMs have a strong ability to capture neighboring patterns.} When we set the neighborhood radius $\epsilon$ to 4, both IF and CF still contribute to high accuracy for the model. Besides, OF continues to show a significant decline (compared to $\epsilon$ = 3). These observations indicate that LLMs can still learn similar patterns even when the observed facts are relatively distant. 

\begin{figure}[tbp] 
    \centering
    \begin{subfigure}[t]{.32\linewidth}
        \centering
	\includegraphics[width=\linewidth]{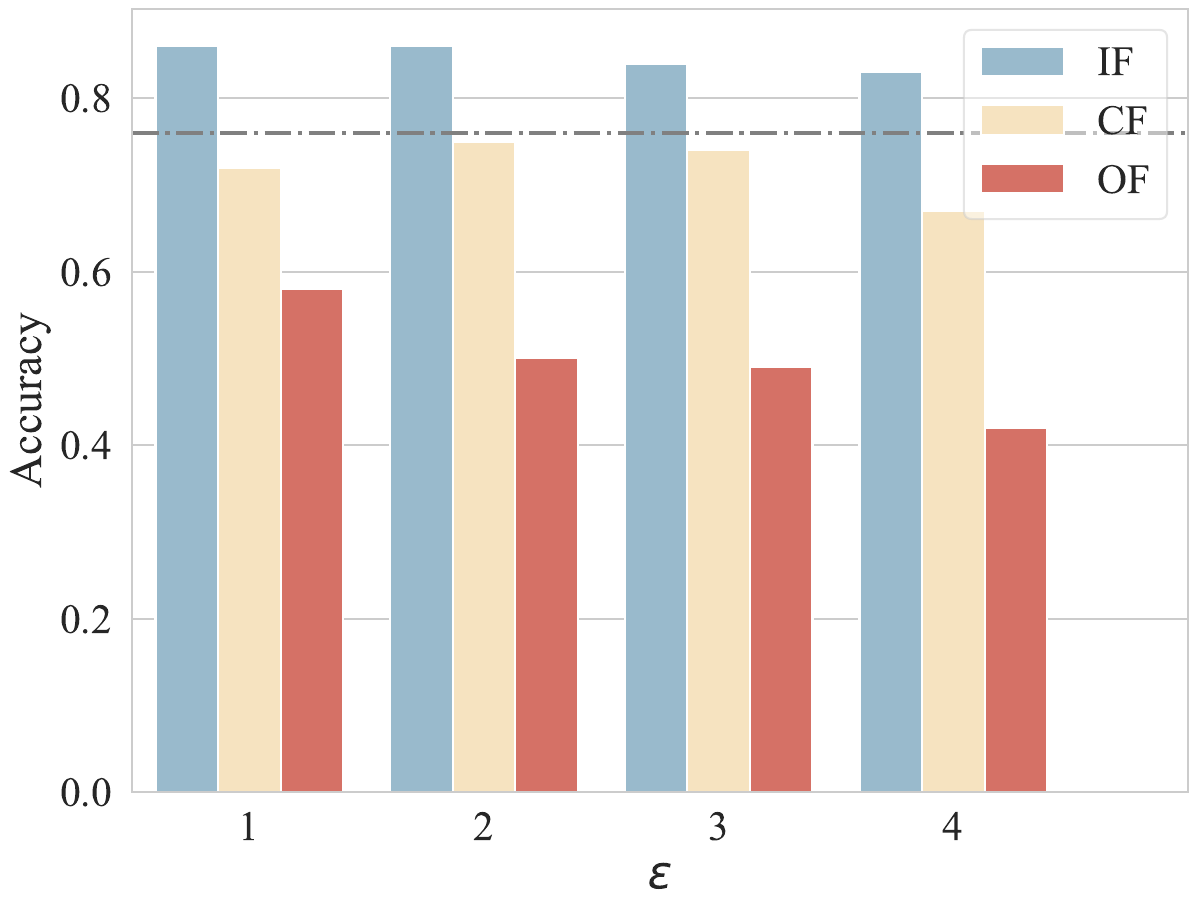}
        \caption{GPT-4o}
    \end{subfigure}
    \begin{subfigure}[t]{.32\linewidth}
        \centering
	\includegraphics[width=\linewidth]{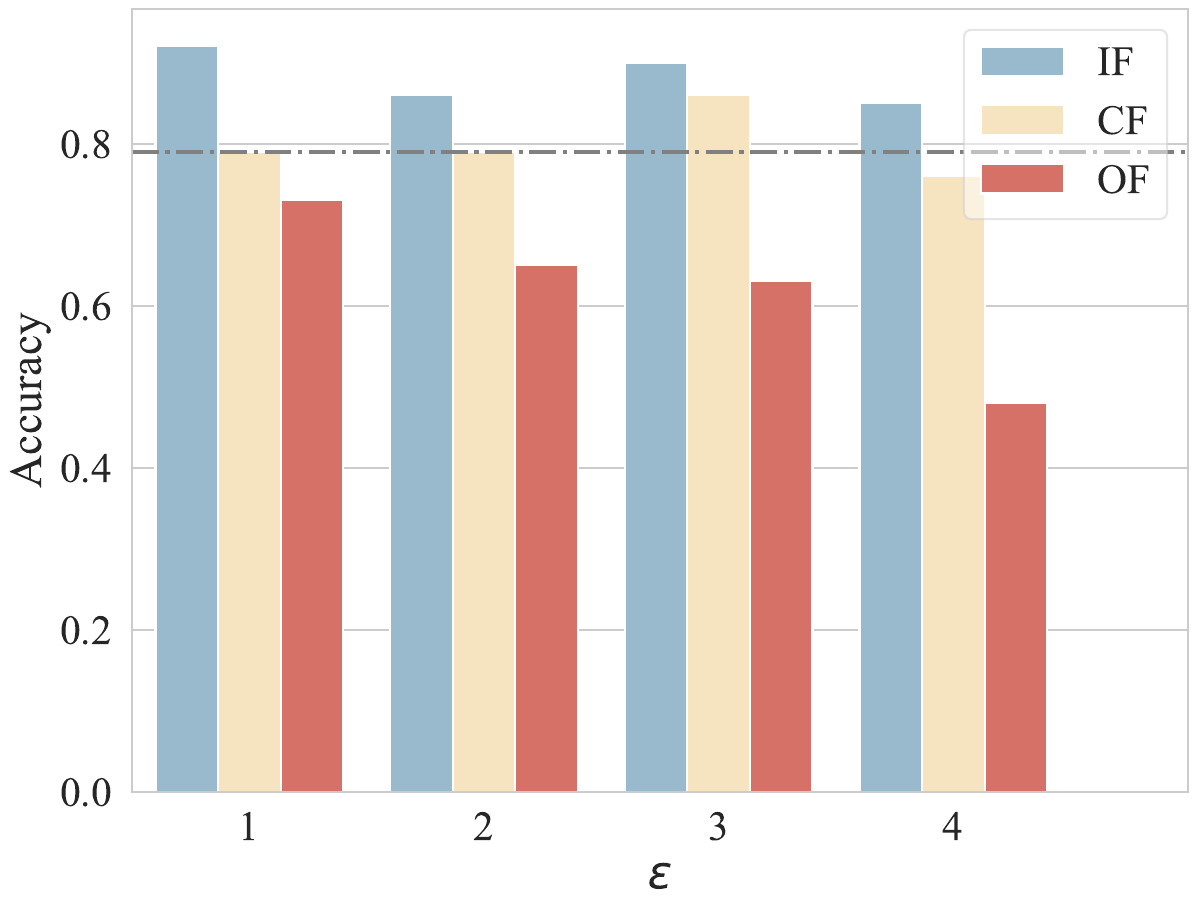}
        \caption{Claude-3.5}
    \end{subfigure}
    \begin{subfigure}[t]{.32\linewidth}
        \centering
	\includegraphics[width=\linewidth]{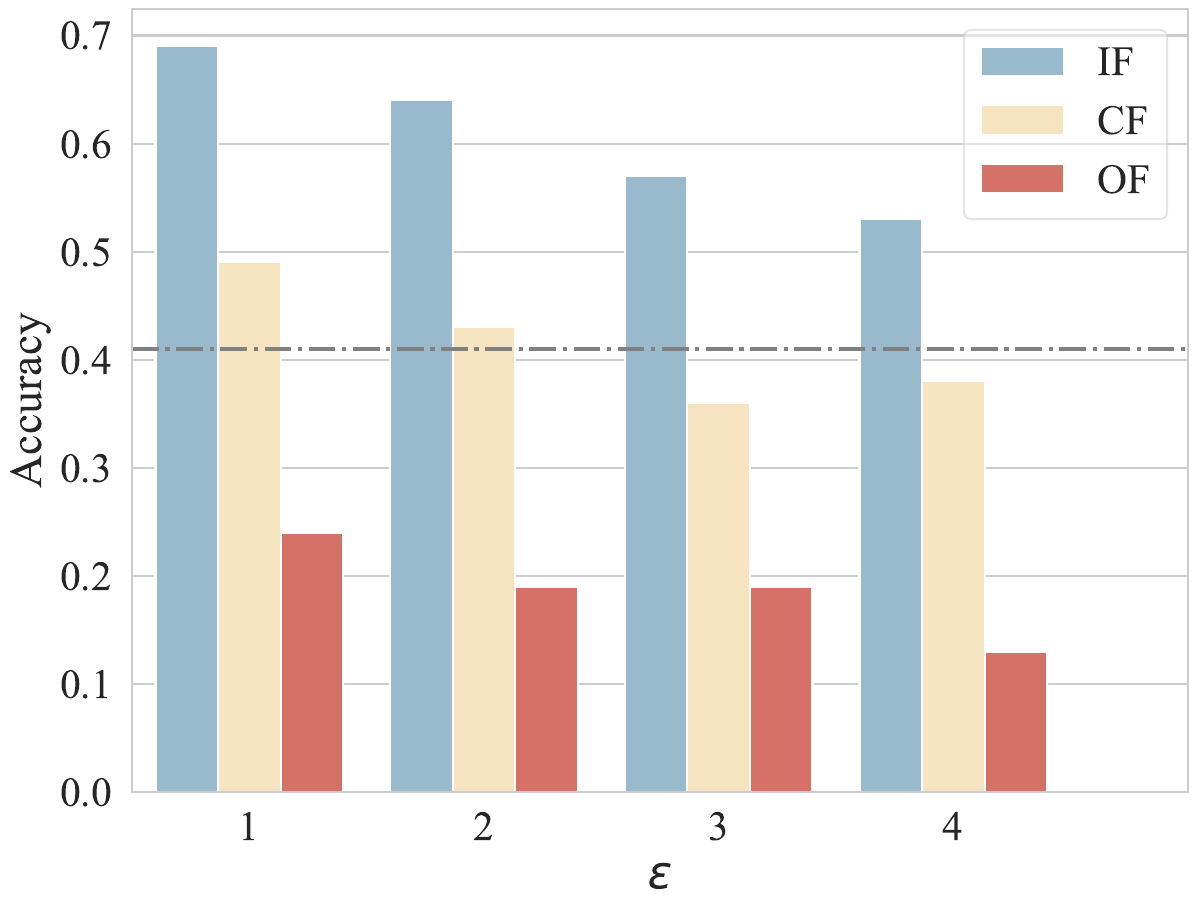}
        \caption{Llama3-8B}
    \end{subfigure}
    \\
    \caption{Performance of different fact types on our dataset ($D$ = 3, $N$ = 5). The dashed line represents the baseline accuracy.}
    \label{fig:4.1}
\end{figure}
\begin{table}[tbp]
\centering
\scalebox{0.9}{ 
\begin{tabular}{lcccccccccccc}
\toprule
\multirow{2}{*}{\textbf{Type}}  & \multicolumn{4}{c}{\textbf{N=3}} & \multicolumn{4}{c}{\textbf{N=5}} & \multicolumn{4}{c}{\textbf{N=8}}\\
\cmidrule(lr){2-5} \cmidrule(lr){6-9} \cmidrule(lr){10-13}
&  LT & RP & CG & ST & LT & RP & CG & ST & LT & RP & CG & ST \\
\midrule
Baseline & 0.52 & 0.19 & 0.78 & 0.42 & 0.66 & 0.36 & 0.71 & 0.46 & 0.76 & 0.34 & 0.80 & 0.54 \\
IF Only & \textbf{0.78} & \textbf{0.46} & \textbf{0.82} & \textbf{0.59} & \textbf{0.84} & \textbf{0.52} & \textbf{0.84} & \textbf{0.63} & \textbf{0.86} & \textbf{0.54} & \textbf{0.91} & \textbf{0.70} \\
CF Only  & 0.48 & 0.25 & 0.59 & 0.43 & 0.69 & 0.35 & 0.72 & 0.50 & 0.75 & 0.34 & 0.82 & 0.53 \\

OF Only & \underline{0.46} & \underline{0.18} & \underline{0.50} & \underline{0.38} & \underline{0.49} & \underline{0.23} & \underline{0.57} & \underline{0.36} & \underline{0.61} & \underline{0.23} & \underline{0.67} & \underline{0.38} \\

\bottomrule
\end{tabular}
}
\caption{Performance of different fact types under various settings ($D$ = 5). The best results in each column are highlighted in \textbf{bold}, while the worst results are \uline{underlined}.} \label{tab:4.2}
\end{table}

\subsection{Universality of Neighbor-based Reasoning} \label{sec:4.3}
We consider whether LLM's inductive reasoning universally relies on neighbor cases, hence, we set $\epsilon$ to 1 and repeat the experiment under different settings, where the baseline is the same as $\S$ \ref{sec:4.2}.
The results on GPT-4o are reported in Table \ref{tab:4.2}, which demonstrates that the \textbf{neighbor-based paradigm is universal in LLMs' inductive reasoning process}. Across different scenarios and fact numbers, IF consistently gets the highest accuracy, while OF gets the lowest accuracy.  The reliance of LLMs' inductive reasoning on neighbor facts is independent of the specific task scenarios, models, or fact numbers. We present more results in Appendix \ref{sec:c.3}.

\begin{wrapfigure}{r}{0.55\textwidth}  
 \centering
    \includegraphics[width=\linewidth]{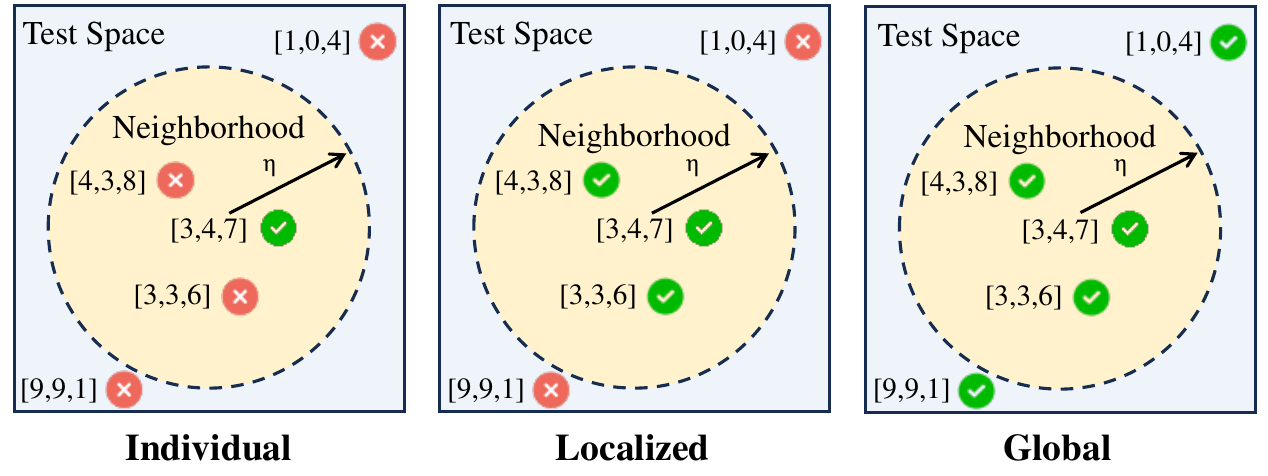}
    \caption{Examples for three different effective scopes.} \label{fig:4}   
\end{wrapfigure}

\subsection{Effective Scope of Neighbor-based Reasoning} \label{sec:4.4}
We have demonstrated that neighbor examples in observed facts significantly affect the model's performance on the test case. However, what is the effective scope of it? Is it only pattern matching on a single test example or reasoning an implicit rule that affects more examples? To answer the question, we first make three assumptions about its possible scope and show them in Figure \ref{fig:4}. For individual scope, the model can only answer the test case $\vx_t$ (e.g. [3,4,7] in the figure), for all other cases, the accuracy of the prediction is very low. For localized scope, the model can also answer cases close to $\vx_t$ (i.e. the neighbor facts of $\vx_t$). For global scope, the model can answer all cases with high accuracy.

\paragraph{Setup} In our experiment, we sample $n$ test cases $\sX_t$ (here $n$ = 5) in each EI task $\tau$ and define the accuracy $a_{\tau}$ for this particular task as:
\footnotesize{
\begin{align}
    a_{\tau} = \frac{1}{n} \sum_{\vx \in \sX_t} \mathbb{I}[\mathcal{M}(\sX_{\tau}, \vx | D) = f(\vx)]
\end{align}}
Here $\sX_{\tau}$ is the observed fact set of the task. Let $T$ denote the set of all EI tasks (we set $|T|$ = 100), we define \textbf{deductive density} $I_d$ as:
\footnotesize{
\begin{align}
   & I_{d} = \frac{1}{|T_c|} \sum_{\tau \in T} a_{\tau} \mathbb{I}[\mathcal{M}(\sX_\tau, \vx_t | D) = f(\vx_t)] \\
    & |T_c| = \sum_{\tau \in T}\mathbb{I}[\mathcal{M}(\sX_{\tau}, \vx_t | D) = f(\vx_t)]
\end{align}}
where $\vx_t$ is the origin test input in task $\tau$. We use this metric to indicate the impact of a successful deduction (i.e. [3, 4, 7] in Figure \ref{fig:4}) on reasoning over other examples in the test region $\mathcal{N}(\vx_t, \eta)$. A high $I_d$ indicates that the model performs well in most cases within this region, while a low $I_d$ suggests that the model's reasoning is more localized or even individual. For comparison, we set the test radius $\eta$ to 1, 2, 3, and infinity (i.e. the full test space), and calculate the corresponding $I_d$ for the model. Besides, we also vary the neighborhood radius $\epsilon$ to examine the impact of different distributions of neighbor facts on their effective scope (here we set the test region to the full space). We repeat the experiment five times to eliminate the interference of random errors, and the results are illustrated in Figure \ref{fig:4.4}.

\begin{figure}[tbp]
        \centering
    \begin{subfigure}[t]{.245\linewidth}
        \centering
    \includegraphics[width=\linewidth]{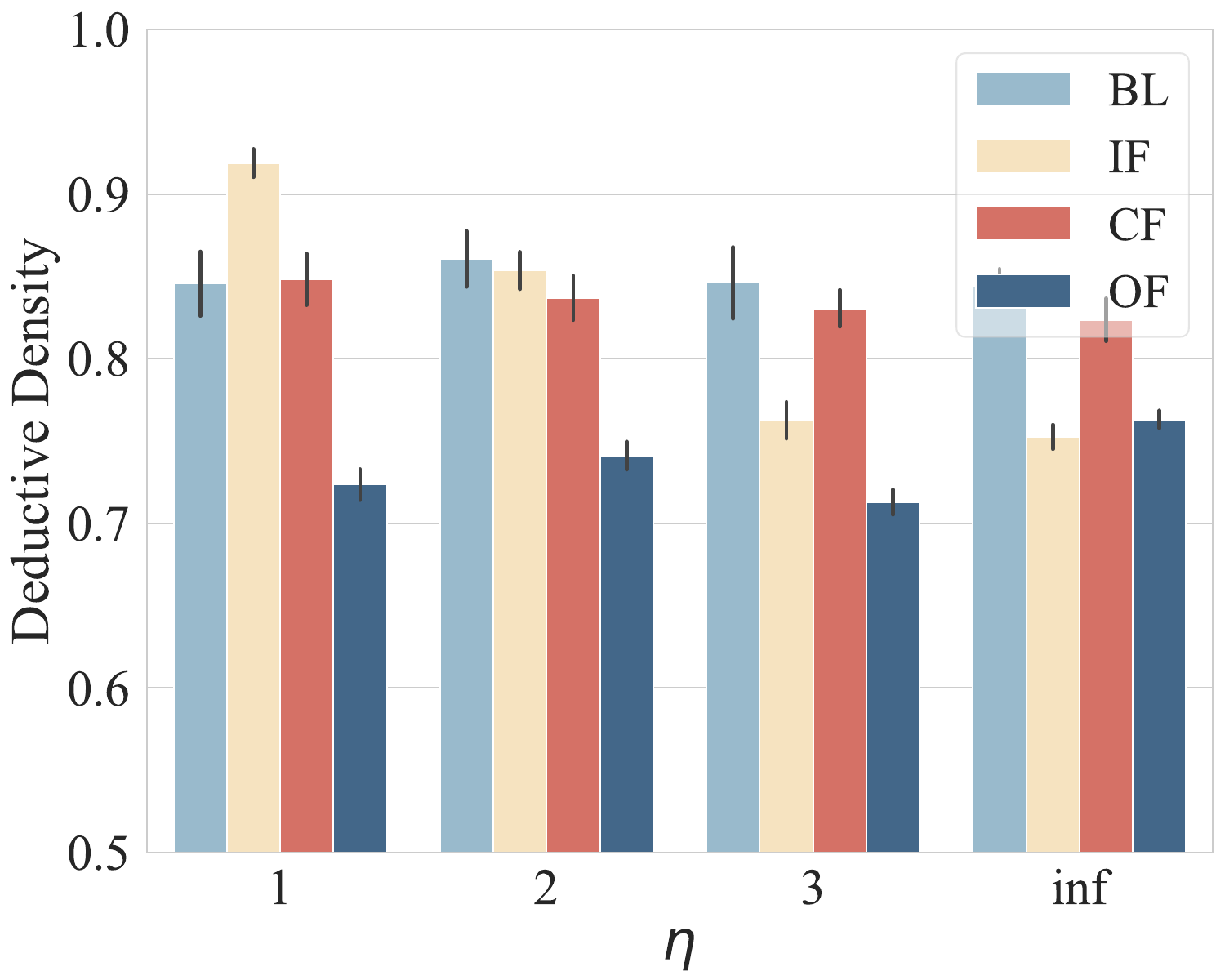}
        \caption{$D$ = 3, $\epsilon$ = 1} \label{fig:4.4.1}
    \end{subfigure}
    \begin{subfigure}[t]{.245\linewidth}
        \centering
    \includegraphics[width=\linewidth] {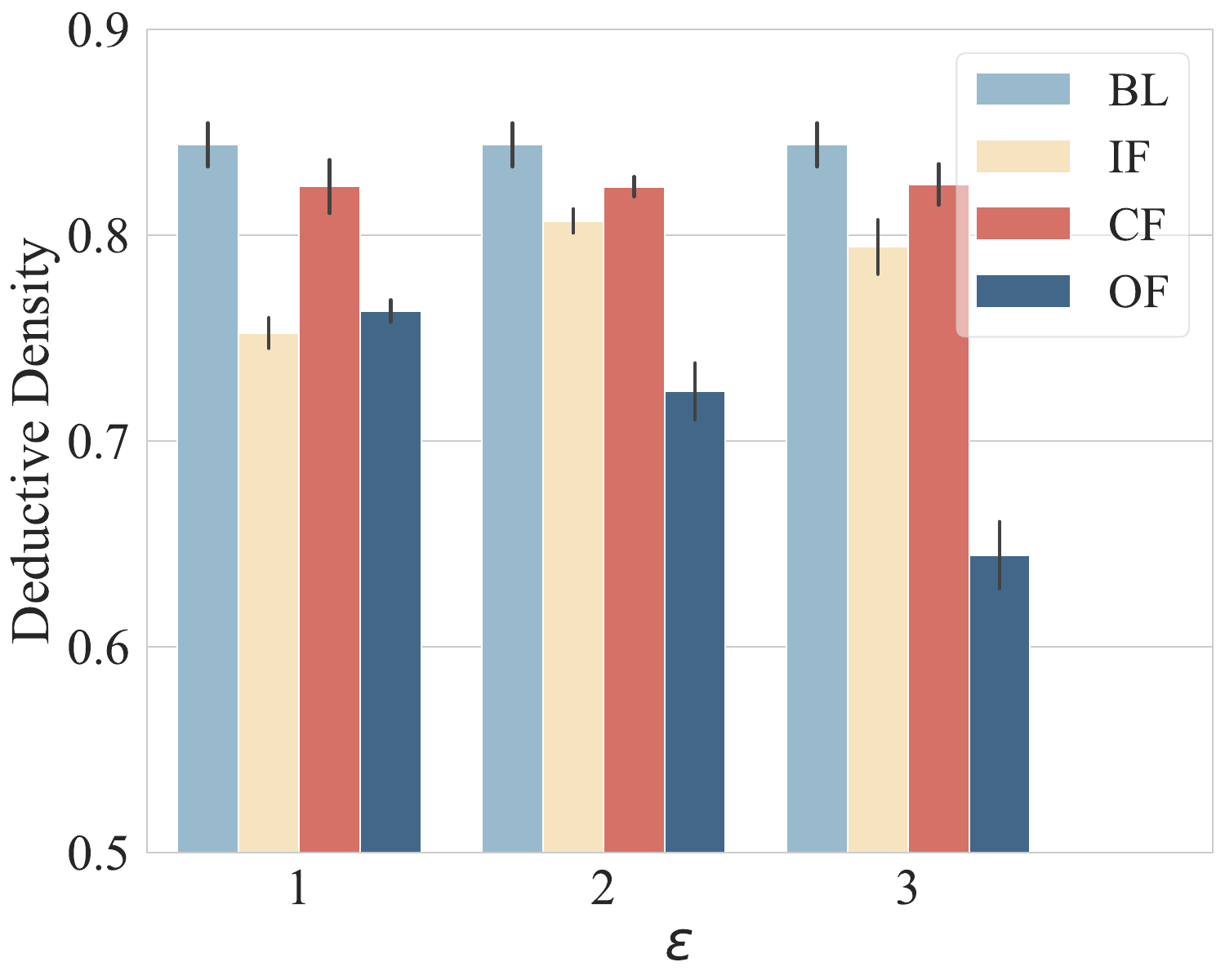}
        \caption{$D$ = 3, $\eta$ = inf} \label{fig:4.4.2}
    \end{subfigure}
     \begin{subfigure}[t]{.245\linewidth}
        \centering
    \includegraphics[width=\linewidth]{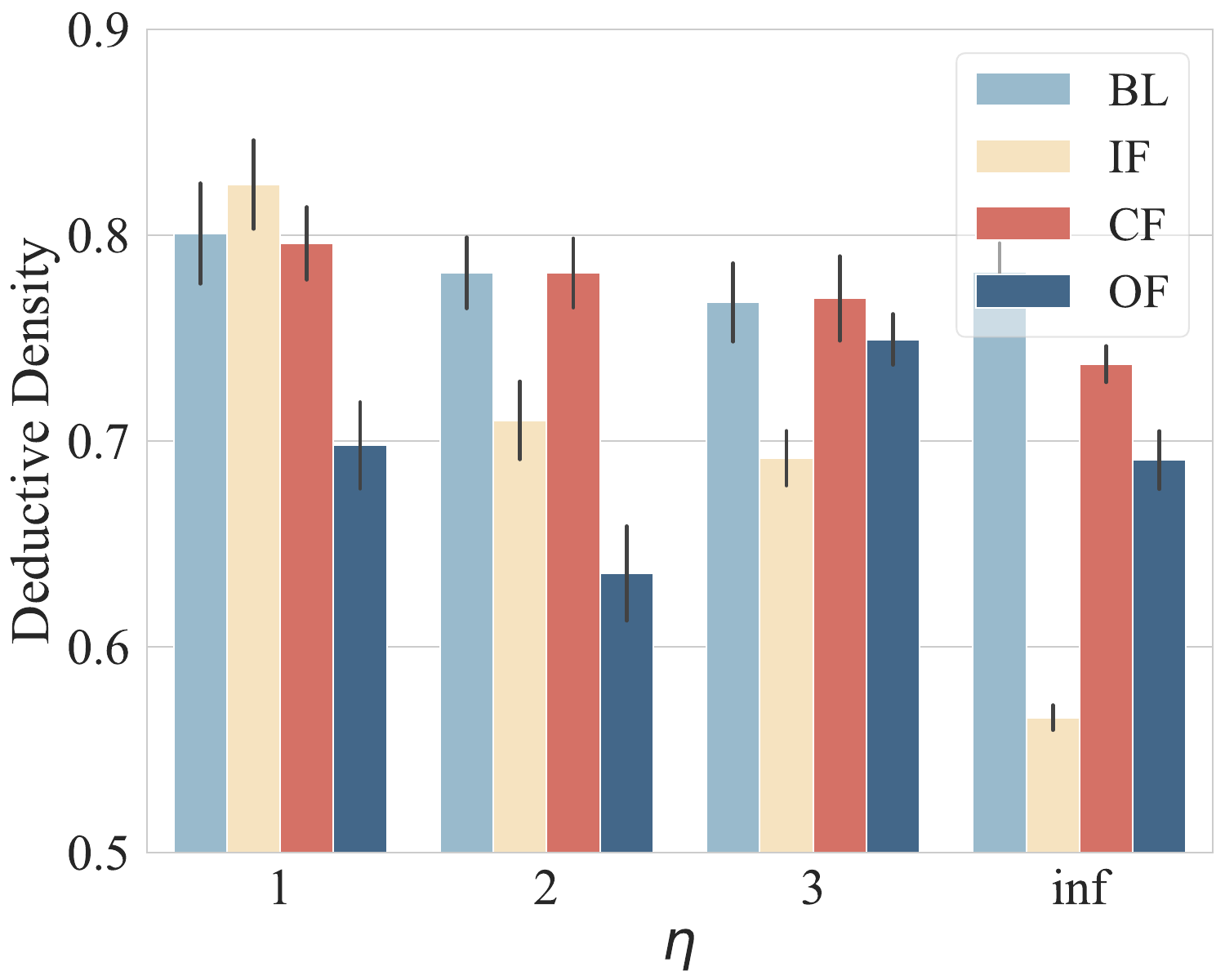}
        \caption{$D$ = 5, $\epsilon$ = 1} \label{fig:4.4.3}
    \end{subfigure}
    \begin{subfigure}[t]{.245\linewidth}
        \centering
    \includegraphics[width=\linewidth] {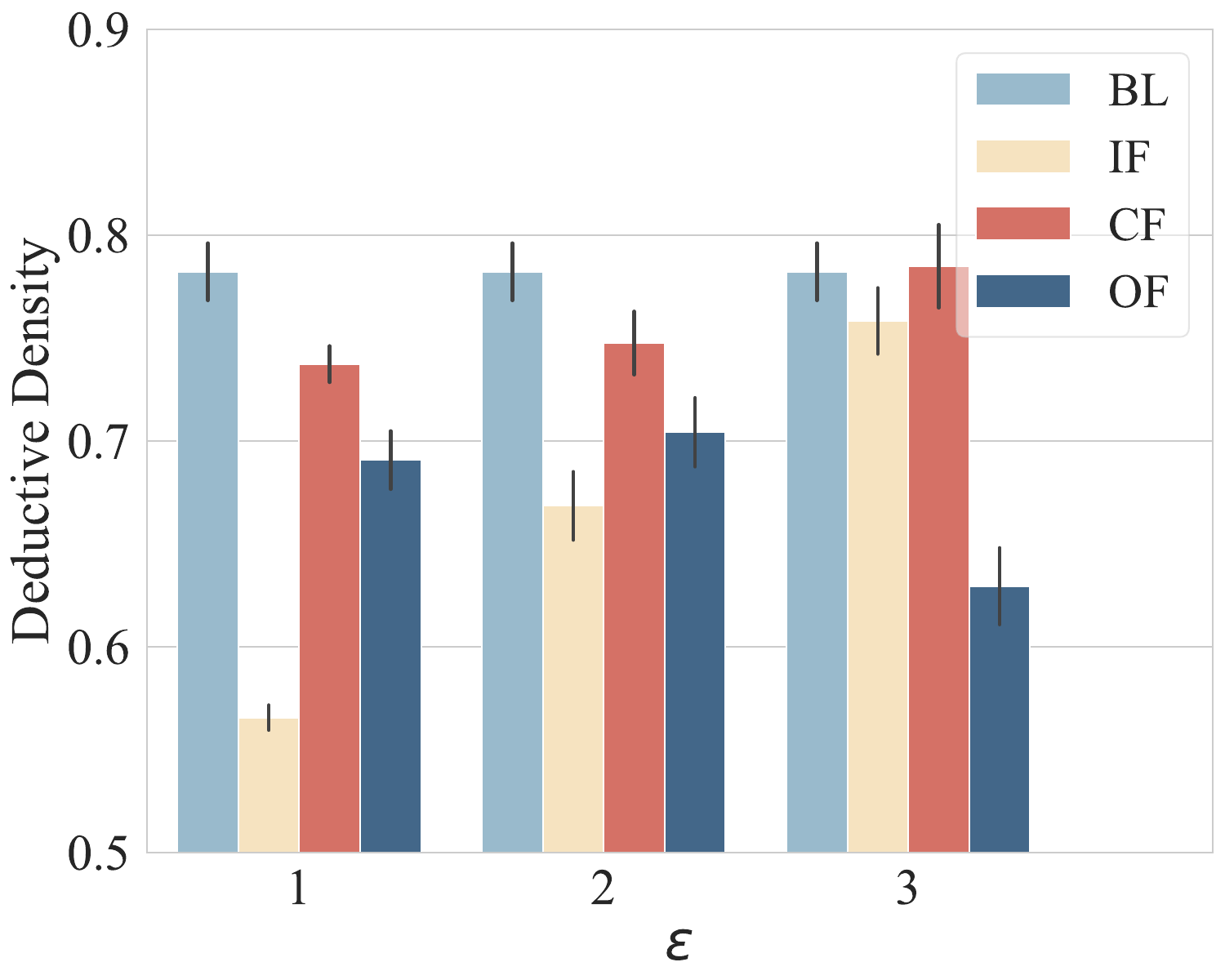}
        \caption{$D$ = 5, $\eta$ = inf} \label{fig:4.4.4}
    \end{subfigure}
    \\
    \caption{Deductive Density ($I_d$) of various fact types on GPT-4o under different test radius $\eta$ and neighborhood radius $\epsilon$ ($N$ = 5). BL represents the performance of the baseline, where we use default fact sets with no substitution.}
    \label{fig:4.4}
\end{figure}

\paragraph{Results} We can draw conclusions as follows:
\textbf{(1) LLM conducts localized reasoning through the neighbor-based paradigm.} From Figure \ref{fig:4.4.1}, \ref{fig:4.4.3}, we observe that the $I_d$ of IF and CF decreases continuously as the radius of the test domain expands. These neighbor cases are highly effective within the neighborhood of $\vx_t$. For example, in Figure \ref{fig:4.4.1}, when $\eta$ = 1, the model can achieve over 0.9 $I_d$. However, this impact diminishes for test cases that are farther from $\vx_t$. As an example, in Figure \ref{fig:4.4.3}, the model only gets around 0.5 $I_d$ in full test space. \textbf{(2) The effective scope of neighbor facts is proportional to their distance from the test case.} According to Figure \ref{fig:4.4.2},\ref{fig:4.4.4}, the $I_d$ of IF and CF (particularly IF) increases as $\epsilon$ becomes large. When the neighborhood radius increases, the distribution of these facts becomes more dispersed. We can infer that a more dispersed distribution of neighbor facts tends to make the effective scope more global.

\section{Limitations and Discussions}
\paragraph{Interpretation Methods} Most model interpretation studies delve into the internal of models (e.g. neurons, attention layers), providing a comprehensive explanation of the working mechanisms \citep{math_reason,other_reason1}. However, our work does not conduct internal analysis but instead relies on performance comparisons under different settings. There are two main considerations for this: On one hand, this work aims to identify mechanisms that are applicable to black-box models. Since we do not have access to the internal parameters of these models, we are unable to use previous methods. On the other hand, white-box models exhibit poor inductive reasoning capabilities according to Table \ref{tab:1}. Therefore, conducting in-depth interpretations based on white-box models may introduce noise to the conclusions.

\paragraph{Experimental Settings} The goal of this paper is to evaluate and explain the inductive reasoning process in LLMs, rather than to improve the task performance. Therefore, we do not meticulously design the prompts used in the experiments, nor do we use the best-performing inductive reasoning methods throughout the analysis. We believe that the experimental setup of 0-shot IO with simple instructions is more aligned with real-world application scenarios, making our evaluation and explanation results more meaningful.

\paragraph{Future Directions} 
Our study demonstrates that LLMs perform poorly in rule-based reasoning but excel at using neighbor facts for reasoning. Future work could explore methods to encourage the model to follow rules more closely during reasoning or to further optimize the model's inductive reasoning abilities based on this neighbor-matching finding.

\section{Related Work}
\paragraph{Evaluating Inductive Reasoning Abilities of LLMs.} Existing studies on evaluating LLM's inductive reasoning capabilities mainly use only a single task. On one hand, some works assess the model's rule induction ability by evaluating the inference accuracy on unseen examples \citep{concept_arc,llm_icl_reason,llm_not_abs,arc_analysis,hr}. However, since the model's deduction does not always rely on inducing the correct rule, this indirect evaluation method can introduce some inaccuracies. On the other hand, some studies directly evaluate the correctness of the generated rules to assess inductive reasoning ability \citep{code_ind, ind_or_ded, llm_as_ind, commonsense_mir}. These studies lack evaluation on test examples, making it difficult to confirm the model's mastery of the inductive rules. Our work evaluates both aspects, providing a comprehensive analysis of the model's inductive reasoning process.

\paragraph{Mechanism Analysis on LLM's Reasoning.} A growing body of interpretability research has begun analyzing the reasoning mechanisms of LLMs, aiming to deepen our understanding of how these models function. Some studies explore the mechanisms behind mathematical reasoning \citep{arith_interpret,case_math,math_reason,math_reason2}, some works investigate multi-hop reasoning \citep{multi_1, multi_2, multi_3, multi_4}, and some focus on other types of reasoning \citep{other_reason1, other_reason2}. However, there is currently a lack of analysis on the mechanisms of inductive reasoning. Our work mitigates this gap and uncovers the neighbor-based paradigms LLMs follow when performing inductive reasoning.

\section{Conclusion}
In this paper, we focus on evaluating and explaining the inductive reasoning process of LLMs. First, we construct a dataset {\scshape Mirage}, which provides both inductive and deductive evaluation tasks, with the flexibility to generate test examples in any distribution, different difficulties, and various forms. Based on it, we demonstrated that LLM is a poor rule-based reasoner, it does not need to rely on inductive rules when performing inductive reasoning. Compared to correct induction, the model can perform successful deduction with fewer observations, and this deduction is closely related to the form of the input. Furthermore, we identify a key paradigm of LLM inductive reasoning: neighbor-based reasoning. The model tends to leverage observed facts that are close to the test examples in feature space for inductive reasoning. Through it, the model can achieve strong inductive reasoning capabilities within a localized scope and apply this ability to make inferences on unseen examples.

\section*{Acknowledgments}
This work is supported by the National Natural Science Foundation of China (No.U24A20335, No. 62176257, No. 62406321). This work is also supported by Beijing Natural Science Foundation (L243006). This work is also supported by the Youth Innovation Promotion Association CAS and the China Postdoctoral Science Foundation under Grant Number 2024M753500.

\bibliography{iclr2025_conference}

\bibliographystyle{iclr2025_conference}

\newpage
\appendix
\section{More Details for Dataset Construction}
\subsection{Comparison of Datasets between Related Work and Our Study } \label{sec:a.1}

Since our work is conducted entirely on our {\scshape Mirage} dataset, we aim to provide a detailed comparison with representative datasets from related studies to demonstrate its effectiveness. Specifically, we report the comparison in Table \ref{tab:append_a.1}. From the results, we can see that our dataset can cover most of the operations and forms in previous datasets. For example, the transformation examples in the 1D-ARC dataset shown in the table are equivalent to our PAD operation. Therefore, we demonstrate that our dataset is an effective dataset for inductive reasoning. Moreover, based on its coverage, conducting experiments solely on it is sufficient.
\begin{table}[htbp]
\scalebox{0.95}{
    \centering
    \begin{tabular}{llll}
    \toprule
    \textbf{Dataset} & \textbf{Fact Form} & \textbf{Main Operation} & \textbf{Example} \\
     \midrule
    
    \multirow{8}{*}{ARC \citep{arc}} &  \multirow{8}{*}{List (2D)} &  \multirow{8}{*}{Fill, Move, Pile} & input: [[0,1,0],[1,1,0],  \\
    & & &  [0,1,0],[0,1,1],[0,1,0], \\
    & & &  [1,1,0]] \\
    & & & \\
    & & &  output: [[0,2,0],[2,2,0], \\ 
    & & &  [0,2,0],[0,2,2],[0,2,0], \\
    & & & [2,2,0],[0,2,0],[0,2,2],\\
    & & & [0,2,0]\\
    \midrule
    \multirow{3}{*}{MiniSCAN \citep{mini_scan}} & \multirow{3}{*}{String} & \multirow{3}{*}{String Translation} & input: her sneury voirk \\
    & & & \\
    & & & output: GREEN BLUE\\
    \midrule
     \multirow{3}{*}{ListFunctions \citep{listfunction}} & \multirow{3}{*}{List} & \multirow{3}{*}{List Operation} & input: [4,7,6,9,0] \\
     & & & \\
     & & & output:[4,8,6,9,0] \\ 
     \midrule
    \multirow{7}{*}{MiniARC \citep{mini_arc}} & \multirow{7}{*}{List (2D)} & \multirow{7}{*}{Fill, Move, Pile} & input: [[1,1,5,6,8],  \\
    & & &  [0,1,5,6,6],[5,5,5,5,5], \\
    & & &  [7,7,5,4,4],[7,7,5,0,4]] \\
    & & & \\
    & & &  output: [[1,6,0,0,0], \\ 
    & & &  [7,4,0,0,0],[0,0,0,0,0], \\
    & & & [0,0,0,0,0], [0,0,0,0,0]]\\ 
    \midrule
    \multirow{5}{*}{1D-ARC \citep{1d_arc}} & \multirow{5}{*}{List} & \multirow{5}{*}{Fill, Move, Pile}  & input: [0,0,2,0,0,0,0,2, \\
    & & & 0,0,0] \\
    & & & \\
    & & & output: [0,0,2,2,2,2,2,2,\\
    & & & 0,0,0]\\
    \midrule 
    \multirow{3}{*}{Case2Code \citep{code_ind}} & \multirow{3}{*}{Code} & \multirow{3}{*}{Python Function} & input: dict(no=2) \\
    & & & \\
    & & & output: [2]\\
    \midrule
   {\scshape Mirage} & All above + RP & All above & See Figure \ref{fig:data}\\
     \bottomrule
    \end{tabular}}
    \caption{Comparison between some representative datasets and ours. }
    \label{tab:append_a.1}
\end{table}

\subsection{Evaluation of Mathematical Operation Difficulty in Mirage} \label{sec:a.2}
Our data introduces some mathematical operations in Add and Map, here we aim to demonstrate that these calculations are inherently simple for LLMs, ensuring that they do not interfere with our evaluation of the model's inductive reasoning performance. Specifically, we randomly construct linear operations in Add and Map with single-digit operands (cover all of the operations included in this paper) and observe the accuracy of each model on 100 questions. The results are reported in Table \ref{tab:a.2}. We can observe that most of the modes can achieve very high accuracy on these math operations. Specifically, all closed-source models achieve 100\% accuracy. This indicates that our dataset construction effectively eliminates noise introduced by mathematical calculations in most cases.
\begin{table}
\centering
\begin{tabular}{lccccc}
\toprule
\textbf{Operation}  & Llama2-13B & Llama3-8B & GPT-4o & GPT-4 & Claude-3.5\\
\midrule
Map & 0.96 & 0.93 & 1.00 & 1.00 & 1.00 \\
Add & 0.51 & 0.99 & 1.00 & 1.00 & 1.00\\
\bottomrule
\end{tabular}
\caption{Accuracy of basic mathematical computations across different models in our dataset.}\label{tab:a.2}
\end{table}

\subsection{Evaluation Templates for Different Tasks and Scenarios} \label{sec:a.3}
In Tables \ref{tab:a.3_1} and \ref{tab:a.3_2}, we report the evaluation prompts used to evaluate the models in our work. In Table \ref{tab:a.3_3}, we provide the templates used for constructing different scenarios in RP.
 
\section{More Details for Rule-based Reasoning Evaluation} \label{append:3}
\subsection{Implement Details for Main Experiments} \label{sec:b.1}
For model version, we select \texttt{Llama-2-13b-chat-hf}, \texttt{Meta-Llama-3-8B-Instruct}, \texttt{gpt-4-0613}, \texttt{gpt-4o-2024-05-13} and \texttt{claude-3-5-sonnet-20240620}. All experiments are conducted on 4 NVIDIA GeForce RTX 3090 GPUs. For the sake of simplicity, we include all the prompts used in this work in the supplementary materials.
\subsection{More Experiments on other Models}\label{sec:b.2}
In this section, we repeat the experiments in $\S$ \ref{sec:3.2} on the Llama3-8B model and Llama2-13B model, the results are shown in Table \ref{tab:b.2.1}, \ref{tab:b.2.2}. Here we set $D$ = 3, $N$ = 5. We can observe that the results in our main text still hold on these models. We do not apply HR on them, since these two models have difficulty in evaluating the rule based on given templates under 0-shot settings.
\begin{table}[htbp]
\centering
\scalebox{0.9}{
\begin{tabular}{lcccccccccccc}
\toprule
\multirow{2}{*}{\textbf{Method}}  & \multicolumn{3}{c}{\textbf{LT}} & \multicolumn{3}{c}{\textbf{RP}} & \multicolumn{3}{c}{\textbf{CG}}  & \multicolumn{3}{c}{\textbf{ST}}\\
\cmidrule(lr){2-4} \cmidrule(lr){5-7} \cmidrule(lr){8-10} \cmidrule(lr){11-13}
 & RI & EI & ($\Delta$) & RI & EI & ($\Delta$) & RI & EI & ($\Delta$) & RI & EI & ($\Delta$) \\
\midrule
IO \footnotesize{(0-shot)} &  0.21 & 0.54 & 0.33 & 0.03 & 0.48 & 0.45 & 0.17 & 0.22 & 0.05 & 0.07 & 0.45 & 0.38 \\
IO \footnotesize{(5-shot)} &  0.32 & 0.41 & 0.09 & 0.35 & 0.40 & 0.05 & 0.14 & 0.24 & 0.10 & 0.25 & 0.35 & 0.10\\
\midrule
ID \footnotesize{(0-shot)} & 0.10 & 0.24 & 0.14 & 0.12 & 0.30 & 0.18 & 0.10 & 0.17 & 0.07 & 0.01 & 0.35 & 0.34\\
ID \footnotesize{(5-shot)} & 0.35 & 0.44 & 0.09 & 0.30 & 0.40 & 0.10 & 0.01 & 0.04 & 0.03 & 0.31 & 0.33 & 0.02\\
\midrule
CoT \footnotesize{(0-shot)} & 0.25 & 0.39 & 0.14 & 0.13 & 0.39 & 0.26 & 0.16 & 0.14 & -0.02 & 0.16 & 0.42 & 0.26 \\    
CoT \footnotesize{(5-shot)} & 0.54 & 0.59 & 0.05 & 0.36 & 0.40 & 0.04 & 0.41 & 0.55 & 0.14 & 0.47 & 0.66 & 0.19\\
\midrule
SC \footnotesize{(n=5)} & 0.47 & 0.54 & 0.07 & 0.35 & 0.37 & 0.02 & 0.45 & 0.55 & 0.10 & 0.51 & 0.62 & 0.11 \\
\midrule
SR \footnotesize{(t=3)}  & 0.16 & 0.26 & 0.10 & 0.09 & 0.27 & 0.18 & 0.08 & 0.20 & 0.12 & 0.05 & 0.39 & 0.34\\
\bottomrule
\end{tabular}}
\caption{Performance of different methods on {\scshape Mirage} using Llama3-8B (100 examples).} \label{tab:b.2.1}
\end{table}

\begin{table}[htbp]
\centering
\scalebox{0.9}{
\begin{tabular}{lcccccccccccc}
\toprule
\multirow{2}{*}{\textbf{Method}}  & \multicolumn{3}{c}{\textbf{LT}} & \multicolumn{3}{c}{\textbf{RP}} & \multicolumn{3}{c}{\textbf{CG}}  & \multicolumn{3}{c}{\textbf{ST}}\\
\cmidrule(lr){2-4} \cmidrule(lr){5-7} \cmidrule(lr){8-10} \cmidrule(lr){11-13}
 & RI & EI & ($\Delta$) & RI & EI & ($\Delta$) & RI & EI & ($\Delta$) & RI & EI & ($\Delta$) \\
\midrule
IO \footnotesize{(0-shot)} &  0.01 & 0.29 & 0.28 & 0.01 & 0.41 & 0.40 & 0.01 & 0.15 & 0.15 & 0.18 & 0.40 & 0.22 \\
IO \footnotesize{(5-shot)} &  0.02 & 0.37 & 0.35 & 0.01 & 0.34 & 0.33 & 0.01 & 0.14 & 0.14 & 0.05 & 0.30 & 0.25\\
\midrule
ID \footnotesize{(0-shot)} & 0.00 & 0.02 & 0.02 & 0.00 & 0.03 & 0.03 & 0.00 & 0.06 & 0.06 & 0.17 & 0.20 & 0.02\\
ID \footnotesize{(5-shot)} &  0.02 & 0.12 & 0.10 & 0.01 & 0.14 & 0.14 & 0.00 & 0.01 & 0.01 & 0.01 & 0.30 & 0.29\\
\midrule
CoT \footnotesize{(0-shot)} & 0.03 & 0.10 & 0.08 & 0.03 & 0.20 & 0.17 & 0.00 & 0.13 & 0.13 & 0.06 & 0.13 & 0.08 \\    
CoT \footnotesize{(5-shot)} &  0.01 & 0.24 & 0.23 & 0.02 & 0.14 & 0.12 & 0.00 & 0.14 & 0.14 & 0.07 & 0.10 & 0.03\\
\midrule
SC \footnotesize{(n=5)} &  0.07 & 0.24 & 0.17 & 0.08 & 0.49 & 0.41 & 0.01 & 0.34 & 0.33 & 0.17 & 0.37 & 0.20 \\
\midrule
SR \footnotesize{(t=3)}  &  0.00 & 0.03 & 0.03 & 0.01 & 0.07 & 0.06 & 0.00 & 0.06 & 0.06 & 0.01 & 0.10 & 0.09\\
\bottomrule
\end{tabular}}
\caption{Performance of different methods on {\scshape Mirage} using Llama2-13B (200 examples).} \label{tab:b.2.2}
\end{table}

\subsection{Experiments on Fine-tuning Method} \label{sec:b.3}
In the main text, we primarily use in-context learning to evaluate the performance of various models. Here, we supplement the evaluation with the performance of fine-tuned models on {\scshape Mirage}. Specifically, we use \texttt{Meta-Llama-3-8B-Instruct} as the backbone, setting N = 5 and D = 5, and train the model on 8,000 samples. For the training parameters, we set the learning rate to 0.0001, the batch size to 1, and the number of epochs to 10. Additionally, LoRA is employed to train people on different types of tasks. The performances on 100 test examples are presented in Table \ref{tab:b.3}. It demonstrates that fine-tuning can effectively enhance the model's inductive reasoning capabilities. Compared to the 5-shot ICL, the model performs better on both reasoning tasks, even when not trained on tasks of the same format.
However, consistent with the conclusion in $\S$ \ref{sec:3.4}, the results of the training do not exhibit good formal generalization. The model tends to perform relatively poorly on test sets with significantly different forms from the training set (e.g., LT and RP).
\begin{table}[htbp]
\centering
\begin{tabular}{lcccccccc}
\toprule
\multirow{2}{*}{\textbf{Method}}  & \multicolumn{2}{c}{\textbf{LT}} & \multicolumn{2}{c}{\textbf{RP}} & \multicolumn{2}{c}{\textbf{CG}}  & \multicolumn{2}{c}{\textbf{ST}}\\
\cmidrule(lr){2-3} \cmidrule(lr){4-5} \cmidrule(lr){6-7} \cmidrule(lr){8-9}
 & RI & EI &  RI & EI & RI & EI & RI & EI  \\
\midrule
5-shot ICL &  0.31 & 0.27 & 0.04 & 0.17 & 0.16 & 0.28 & 0.21 & 0.32\\
LT Training &  \textbf{0.89} & \textbf{0.82} & 0.22 & 0.24 & 0.34 & 0.80 & 0.26 & \textbf{0.38}\\
RP Training &  0.51 & 0.44 & \textbf{0.78} & \textbf{0.74} & 0.42 & 0.69 & 0.24 & 0.35\\
CG Training &  0.76 & 0.75 & 0.22 & 0.25 & \textbf{0.86} & \textbf{0.80} & 0.26 & 0.37\\
ST Training &  0.51 & 0.52 & 0.19 & 0.21 & 0.33 & 0.61 & \textbf{0.50} & 0.35\\
\bottomrule
\end{tabular}
\caption{Performance of the fine-tuned model on {\scshape Mirage}. The best results in each column are highlighted in \textbf{bold}.} \label{tab:b.3}
\end{table}

\section{More Details for Neighbor-based Reasoning Evaluation} 
\subsection{Proof of Continues Functions} \label{sec:c.1}
Here, we prove that the five basic vector operations in {\scshape Mirage} are all continuous functions:

\begin{theorem}[Add Operation Continuity]
Let $\mathbf{A} = (a_1, a_2, \ldots, a_n) \in \mathbb{R}^n$. Define a mapping $f: \mathbb{R}^n \to \mathbb{R}^n$ such that for a fixed index $k \in \{1, 2, \ldots, n\}$ and a fixed subset $I \subseteq \{1, 2, \ldots, n\}$, we have
\[
f(\mathbf{A}) = (a_1, \ldots, a_{k-1}, \sum_{i \in I} a_i, a_{k+1}, \ldots, a_n),
\]
where $k \notin I$. Then $f$ is a continuous function.
\end{theorem}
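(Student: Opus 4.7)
The plan is to reduce continuity of the vector-valued map $f$ to continuity of its coordinate functions, and then observe that each coordinate function is a finite linear combination of projections, hence continuous. First I would invoke the standard fact that a map $f : \mathbb{R}^n \to \mathbb{R}^n$ is continuous if and only if each of its component maps $f_j : \mathbb{R}^n \to \mathbb{R}$ is continuous (this follows immediately from the product topology on $\mathbb{R}^n$, or equivalently from the equivalence of the Euclidean norm with the max norm).

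Next I would write the component maps explicitly from the definition. For $j \neq k$, $f_j(\mathbf{A}) = a_j$, which is the $j$-th coordinate projection $\pi_j$. For $j = k$, $f_k(\mathbf{A}) = \sum_{i \in I} a_i = \sum_{i \in I} \pi_i(\mathbf{A})$. Projections $\pi_i$ are linear functionals on $\mathbb{R}^n$ and are $1$-Lipschitz with respect to the Euclidean norm (since $|\pi_i(\mathbf{A}) - \pi_i(\mathbf{B})| = |a_i - b_i| \le \|\mathbf{A} - \mathbf{B}\|_2$), so in particular continuous.

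Finally I would close the argument by noting that the pointwise sum of finitely many continuous real-valued functions is continuous (an $\epsilon$--$\delta$ triangle-inequality argument, or a direct appeal to the algebra of continuous functions). Thus $f_k$ is continuous as a sum of $|I|$ continuous projections, and each $f_j$ with $j \neq k$ is continuous as a single projection. Assembling the components, $f$ itself is continuous.

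I do not expect any real obstacle here: the statement is a routine consequence of the algebra of continuous functions on Euclidean space, and the only mild subtlety is to be careful that the hypothesis $k \notin I$ ensures the construction is well-defined but plays no role in the continuity argument itself. The only thing to be mindful of when writing the proof cleanly is to state the component-wise criterion precisely before decomposing $f$, so that the final assembly step is a one-line invocation rather than an ad hoc verification.
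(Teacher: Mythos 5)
Your proposal is correct, but it takes a different route from the paper. You reduce continuity of $f$ to continuity of its coordinate functions and then observe that each coordinate is either a single projection $\pi_j$ or the finite sum $\sum_{i \in I}\pi_i$, closing with the algebra of continuous functions. The paper instead works directly with the Euclidean norm of $f(\mathbf{A})-f(\mathbf{B})$: it expands the squared norm, isolates the $k$-th term $\bigl(\sum_{i\in I}(a_i-b_i)\bigr)^2$, bounds it via the triangle inequality and Cauchy--Schwarz by $|I|\sum_{i\in I}(a_i-b_i)^2$, and concludes that $\|f(\mathbf{A})-f(\mathbf{B})\|\le C\|\mathbf{A}-\mathbf{B}\|$ for a constant $C$ depending on $n$ and $|I|$, finishing with an explicit $\epsilon$--$\delta$ choice $\delta=\epsilon/C$. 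The paper's argument thus establishes the stronger quantitative fact that $f$ is Lipschitz (which is in the spirit of the paper's use of these results, since the surrounding discussion is about neighborhoods being mapped into neighborhoods of controlled size), whereas your componentwise argument is shorter, avoids the Cauchy--Schwarz bookkeeping, and generalizes immediately to any map whose coordinates are linear in the inputs. You are also right that the hypothesis $k\notin I$ plays no role in the continuity argument itself; the paper likewise never uses it in the estimate. Either proof is acceptable; if you wanted to match the paper's quantitative flavor you would only need to add that a finite sum of $1$-Lipschitz projections is Lipschitz with constant at most $|I|$ (or $\sqrt{|I|}$ via Cauchy--Schwarz).
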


\begin{proof}
Consider two vectors $\mathbf{A}, \mathbf{B} \in \mathbb{R}^n$:
\[
\mathbf{A} = (a_1, a_2, \ldots, a_n), \quad \mathbf{B} = (b_1, b_2, \ldots, b_n).
\]

The mapping $f$ replaces the $k$-th element of the vector with the sum of elements indexed by the subset $I$. Thus,
\[
f(\mathbf{A}) = (a_1, \ldots, a_{k-1}, \sum_{i \in I} a_i, a_{k+1}, \ldots, a_n),
\]
\[
f(\mathbf{B}) = (b_1, \ldots, b_{k-1}, \sum_{i \in I} b_i, b_{k+1}, \ldots, b_n).
\]

The distance between the images of $\mathbf{A}$ and $\mathbf{B}$ under $f$ is
\[
\|f(\mathbf{A}) - f(\mathbf{B})\| = \sqrt{\sum_{j=1, j \neq k}^n (a_j - b_j)^2 + \left( \sum_{i \in I} a_i - \sum_{i \in I} b_i \right)^2}.
\]

Let us focus on the term involving the sums:
\[
\sum_{i \in I} a_i - \sum_{i \in I} b_i = \sum_{i \in I} (a_i - b_i).
\]

By the triangle inequality, we have
\[
\left| \sum_{i \in I} (a_i - b_i) \right| \leq \sum_{i \in I} |a_i - b_i|.
\]

Therefore, 
\[
\left( \sum_{i \in I} a_i - \sum_{i \in I} b_i \right)^2 \leq \left( \sum_{i \in I} |a_i - b_i| \right)^2.
\]

Using the Cauchy-Schwarz inequality, we get
\[
\left( \sum_{i \in I} |a_i - b_i| \right)^2 \leq |I| \sum_{i \in I} (a_i - b_i)^2,
\]
where \(|I|\) is the cardinality of the set \(I\).

Therefore, 
\[
\|f(\mathbf{A}) - f(\mathbf{B})\| \leq \sqrt{\sum_{j=1, j \neq k}^n (a_j - b_j)^2 + |I| \sum_{i \in I} (a_i - b_i)^2}.
\]

This can be bounded as
\[
\|f(\mathbf{A}) - f(\mathbf{B})\| \leq C \|\mathbf{A} - \mathbf{B}\|,
\]
where \(C\) is a constant depending on \(n\) and \(|I|\).

Therefore, for any \(\epsilon > 0\), choose \(\delta = \frac{\epsilon}{C}\). If \(\|\mathbf{A} - \mathbf{B}\| < \delta\), then
\[
\|f(\mathbf{A}) - f(\mathbf{B})\| < C \delta = \epsilon.
\]

Hence, \(f\) is continuous.
\end{proof}

\begin{theorem}[Copy Operation Continuity]
Let $\mathbf{A} = (a_1, a_2, \ldots, a_n) \in \mathbb{R}^n$. Define a mapping $f: \mathbb{R}^n \to \mathbb{R}^n$ such that for fixed indices $J \subseteq \{1, 2, \ldots, n\}$ and a fixed index $k \in \{1, 2, \ldots, n\}$, we have
\[
f(\mathbf{A}) = (b_1, b_2, \ldots, b_n),
\]
where 
\[
b_i = \begin{cases}
a_k & \text{if } i \in J,\\
a_i & \text{otherwise}.
\end{cases}
\]
Then $f$ is a continuous function.
\end{theorem}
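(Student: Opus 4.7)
The plan is to mimic the Lipschitz-style argument used in the Add case: exhibit a constant $C$ such that $\|f(\mathbf{A})-f(\mathbf{B})\| \le C\|\mathbf{A}-\mathbf{B}\|$, and then deduce continuity by the standard $\epsilon$-$\delta$ choice $\delta = \epsilon/C$.

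First, I would fix arbitrary $\mathbf{A}=(a_1,\ldots,a_n)$ and $\mathbf{B}=(b_1,\ldots,b_n)$ and write out $f(\mathbf{A})-f(\mathbf{B})$ component by component. By the definition of $f$, the $i$-th component of this difference equals $a_k-b_k$ whenever $i\in J$, and equals $a_i-b_i$ whenever $i\notin J$. Squaring and summing yields
\begin{equation*}
\|f(\mathbf{A})-f(\mathbf{B})\|^2 \;=\; |J|\,(a_k-b_k)^2 \;+\; \sum_{i\notin J}(a_i-b_i)^2,
\end{equation*}
where $|J|$ is the cardinality of $J$. Each term on the right is at most $\|\mathbf{A}-\mathbf{B}\|^2$, since $(a_k-b_k)^2 \le \sum_{j=1}^n (a_j-b_j)^2$ and the remaining sum is a partial sum of $\|\mathbf{A}-\mathbf{B}\|^2$. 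This gives the bound $\|f(\mathbf{A})-f(\mathbf{B})\|^2 \le (|J|+1)\|\mathbf{A}-\mathbf{B}\|^2$, so we may take $C=\sqrt{|J|+1}$.

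Having the Lipschitz estimate in hand, the continuity conclusion is routine: for any $\epsilon>0$, choose $\delta=\epsilon/C$, and observe that $\|\mathbf{A}-\mathbf{B}\|<\delta$ forces $\|f(\mathbf{A})-f(\mathbf{B})\|<\epsilon$. This matches the conclusion style of the Add theorem and establishes continuity at every point.

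The calculation itself is essentially trivial, so there is no deep obstacle. The only place where care is needed is the case analysis on whether $k\in J$ or $k\notin J$ (the statement does not exclude $k\in J$, and the definition of $b_i$ is consistent in either case since $b_k=a_k$ either way), and making sure that the coefficient $|J|$ in front of $(a_k-b_k)^2$ does not accidentally double-count the index $k$ when $k\in J$. I would handle this by phrasing the sum as $\sum_{i=1}^n (b_i^{\mathbf{A}}-b_i^{\mathbf{B}})^2$ and splitting according to membership in $J$, which removes any ambiguity and produces the clean bound above.
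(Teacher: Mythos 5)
Your proof is correct and follows essentially the same route as the paper's: decompose $f(\mathbf{A})-f(\mathbf{B})$ componentwise, obtain $\|f(\mathbf{A})-f(\mathbf{B})\|^2=|J|(a_k-b_k)^2+\sum_{i\notin J}(a_i-b_i)^2$, and conclude via a Lipschitz bound and the choice $\delta=\epsilon/C$. The only difference is that you supply the explicit constant $C=\sqrt{|J|+1}$ (and note the harmless $k\in J$ case), whereas the paper leaves $C$ unspecified.
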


\begin{proof}
Consider two vectors $\mathbf{A}, \mathbf{B} \in \mathbb{R}^n$:
\[
\mathbf{A} = (a_1, a_2, \ldots, a_n), \quad \mathbf{B} = (b_1, b_2, \ldots, b_n).
\]

The mapping $f$ replaces each element of \(\mathbf{A}\) at the positions indexed by \(J\) with the value of the element at index \(k\). Specifically,
\[
f(\mathbf{A}) = (c_1, c_2, \ldots, c_n),
\]
where
\[
c_i = \begin{cases}
a_k & \text{if } i \in J,\\
a_i & \text{otherwise}.
\end{cases}
\]

Similarly, 
\[
f(\mathbf{B}) = (d_1, d_2, \ldots, d_n),
\]
where
\[
d_i = \begin{cases}
b_k & \text{if } i \in J,\\
b_i & \text{otherwise}.
\end{cases}
\]

The distance between the images of \(\mathbf{A}\) and \(\mathbf{B}\) under \(f\) is given by
\[
\|f(\mathbf{A}) - f(\mathbf{B})\| = \sqrt{\sum_{i=1}^n (c_i - d_i)^2}.
\]

By the definition of \(f\), we have
\[
c_i - d_i = \begin{cases}
a_k - b_k & \text{if } i \in J,\\
a_i - b_i & \text{otherwise}.
\end{cases}
\]

Therefore, the distance can be rewritten as
\[
\|f(\mathbf{A}) - f(\mathbf{B})\| = \sqrt{\sum_{i \in J} (a_k - b_k)^2 + \sum_{i \notin J} (a_i - b_i)^2}.
\]

Since the sum over \(J\) has \(|J|\) terms that are all equal to \((a_k - b_k)^2\), this simplifies to
\[
\|f(\mathbf{A}) - f(\mathbf{B})\| = \sqrt{|J| (a_k - b_k)^2 + \sum_{i \notin J} (a_i - b_i)^2}.
\]

This can be bounded as
\[
\|f(\mathbf{A}) - f(\mathbf{B})\| \leq C \|\mathbf{A} - \mathbf{B}\|,
\]
where \(C\) is a constant depending on \(n\) and \(|J|\).

Therefore, for any \(\epsilon > 0\), choose \(\delta = \frac{\epsilon}{C}\). If \(\|\mathbf{A} - \mathbf{B}\| < \delta\), then
\[
\|f(\mathbf{A}) - f(\mathbf{B})\| < C \delta = \epsilon.
\]

Hence, \(f\) is continuous.
\end{proof}

\begin{theorem}[Map Operation Continuity]
Let $\mathbf{A} = (a_1, a_2, \ldots, a_n) \in \mathbb{R}^n$. Define a mapping $f: \mathbb{R}^n \to \mathbb{R}^n$ such that for fixed indices $J \subseteq \{1, 2, \ldots, n\}$ and fixed scalars $k, b \in \mathbb{R}$, we have
\[
f(\mathbf{A}) = (b_1, b_2, \ldots, b_n),
\]
where
\[
b_i = \begin{cases}
k a_i + b & \text{if } i \in J,\\
a_i & \text{otherwise}.
\end{cases}
\]
Then $f$ is a continuous function.
\end{theorem}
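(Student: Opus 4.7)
The plan is to mirror the structure of the Add and Copy proofs, since the Map operation is again a coordinatewise affine transformation and hence should be Lipschitz continuous with an explicit constant depending on $k$ and $|J|$.

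First I would take two vectors $\mathbf{A}, \mathbf{B} \in \mathbb{R}^n$ and write out $f(\mathbf{A}) = (c_1,\ldots,c_n)$ and $f(\mathbf{B}) = (d_1,\ldots,d_n)$ using the piecewise definition, so that
\begin{equation*}
c_i - d_i = \begin{cases} k(a_i - b_i) & \text{if } i \in J, \\ a_i - b_i & \text{otherwise}. \end{cases}
\end{equation*}
Note that the constant shift $b$ cancels on coordinates in $J$, which is the key simplification; the difference depends only on $k$ and the pointwise differences $a_i - b_i$.

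Next I would compute the Euclidean norm of $f(\mathbf{A}) - f(\mathbf{B})$ by splitting the sum over $J$ and its complement:
\begin{equation*}
\|f(\mathbf{A}) - f(\mathbf{B})\| = \sqrt{\sum_{i \in J} k^2 (a_i - b_i)^2 + \sum_{i \notin J} (a_i - b_i)^2}.
\end{equation*}
Factoring out $C = \max(|k|, 1)$, I would upper-bound this by
\begin{equation*}
\|f(\mathbf{A}) - f(\mathbf{B})\| \le C \sqrt{\sum_{i=1}^n (a_i - b_i)^2} = C \|\mathbf{A} - \mathbf{B}\|,
\end{equation*}
which exhibits $f$ as Lipschitz with constant $C$.

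Finally I would conclude continuity by the standard $\epsilon$--$\delta$ argument: given $\epsilon > 0$, choose $\delta = \epsilon/C$, so that $\|\mathbf{A} - \mathbf{B}\| < \delta$ forces $\|f(\mathbf{A}) - f(\mathbf{B})\| < \epsilon$. There is no real obstacle here; the only point to be careful about is the degenerate case $k = 0$ (the operation becomes a constant-fill on $J$), which is still Lipschitz with constant $1$ on the complement, so choosing $C = \max(|k|, 1)$ safely handles every value of $k$. The affine shift $b$ does not affect continuity since it drops out of all differences.
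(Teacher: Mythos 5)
Your proposal is correct and follows essentially the same route as the paper's proof: the offset $b$ cancels in the coordinatewise differences, the Euclidean norm splits over $J$ and its complement, and the Lipschitz constant $C = \max(1, |k|)$ yields continuity via the standard $\epsilon$--$\delta$ choice $\delta = \epsilon/C$. Your extra remark on the degenerate case $k=0$ is a harmless addition the paper does not make explicit.
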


\begin{proof}
Consider two vectors $\mathbf{A}, \mathbf{B} \in \mathbb{R}^n$:
\[
\mathbf{A} = (a_1, a_2, \ldots, a_n), \quad \mathbf{B} = (b_1, b_2, \ldots, b_n).
\]

The mapping $f$ applies the linear transformation \(kx + b\) to the elements of \(\mathbf{A}\) indexed by \(J\) and leaves the other elements unchanged:
\[
f(\mathbf{A}) = (c_1, c_2, \ldots, c_n),
\]
where
\[
c_i = \begin{cases}
k a_i + b & \text{if } i \in J,\\
a_i & \text{otherwise}.
\end{cases}
\]

Similarly,
\[
f(\mathbf{B}) = (d_1, d_2, \ldots, d_n),
\]
where
\[
d_i = \begin{cases}
k b_i + b & \text{if } i \in J,\\
b_i & \text{otherwise}.
\end{cases}
\]

The distance between the images of \(\mathbf{A}\) and \(\mathbf{B}\) under \(f\) is given by
\[
\|f(\mathbf{A}) - f(\mathbf{B})\| = \sqrt{\sum_{i=1}^n (c_i - d_i)^2}.
\]

By the definition of \(f\), we have
\[
c_i - d_i = \begin{cases}
k (a_i - b_i) & \text{if } i \in J,\\
a_i - b_i & \text{otherwise}.
\end{cases}
\]

Therefore, the distance can be rewritten as
\[
\|f(\mathbf{A}) - f(\mathbf{B})\| = \sqrt{\sum_{i \in J} (k (a_i - b_i))^2 + \sum_{i \notin J} (a_i - b_i)^2}.
\]

This simplifies to
\[
\|f(\mathbf{A}) - f(\mathbf{B})\| = \sqrt{k^2 \sum_{i \in J} (a_i - b_i)^2 + \sum_{i \notin J} (a_i - b_i)^2}.
\]

Let \(C = \max(1, |k|)\). Then
\[
\|f(\mathbf{A}) - f(\mathbf{B})\| \leq C \sqrt{\sum_{i=1}^n (a_i - b_i)^2} = C \|\mathbf{A} - \mathbf{B}\|.
\]

Therefore, for any \(\epsilon > 0\), choose \(\delta = \frac{\epsilon}{C}\). If \(\|\mathbf{A} - \mathbf{B}\| < \delta\), then
\[
\|f(\mathbf{A}) - f(\mathbf{B})\| < C \delta = \epsilon.
\]

Hence, \(f\) is continuous.
\end{proof}

\begin{theorem}[Pad Operation Continuity]
Let $\mathbf{A} = (a_1, a_2, \ldots, a_n) \in \mathbb{R}^n$. Define a mapping $f: \mathbb{R}^n \to \mathbb{R}^n$ such that for a fixed subset $J \subseteq \{1, 2, \ldots, n\}$ and a fixed constant $C \in \mathbb{R}$, we have
\[
f(\mathbf{A}) = (b_1, b_2, \ldots, b_n),
\]
where
\[
b_i = \begin{cases}
C & \text{if } i \in J,\\
a_i & \text{otherwise}.
\end{cases}
\]
Then $f$ is a continuous function.
\end{theorem}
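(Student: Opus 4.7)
The plan is to mirror the $\epsilon$-$\delta$ structure used for the Add, Copy, and Map theorems, but the argument should in fact be strictly simpler, since Pad replaces the indexed components with a constant that does not depend on the input at all. First I would fix two arbitrary vectors $\mathbf{A}, \mathbf{B} \in \mathbb{R}^n$ and write out $f(\mathbf{A}) = (c_1, \ldots, c_n)$ and $f(\mathbf{B}) = (d_1, \ldots, d_n)$ according to the definition. The key observation is that for every $i \in J$ we have $c_i = d_i = C$, hence $c_i - d_i = 0$, while for every $i \notin J$ we have $c_i - d_i = a_i - b_i$.

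Second, I would compute the Euclidean distance between the images. Using the observation above, the contribution of coordinates indexed by $J$ vanishes, leaving
\[
\|f(\mathbf{A}) - f(\mathbf{B})\|^2 \;=\; \sum_{i \notin J} (a_i - b_i)^2 \;\leq\; \sum_{i=1}^n (a_i - b_i)^2 \;=\; \|\mathbf{A} - \mathbf{B}\|^2.
\]
Taking square roots gives $\|f(\mathbf{A}) - f(\mathbf{B})\| \leq \|\mathbf{A} - \mathbf{B}\|$, i.e.\ $f$ is $1$-Lipschitz with constant $C = 1$, analogous to (but tighter than) the bounds obtained in the previous three theorems.

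Finally, to conclude continuity in the same style as the earlier proofs, I would let $\epsilon > 0$ be arbitrary and simply choose $\delta = \epsilon$. Then $\|\mathbf{A} - \mathbf{B}\| < \delta$ immediately yields $\|f(\mathbf{A}) - f(\mathbf{B})\| < \epsilon$, establishing continuity of $f$ at every point. There is essentially no obstacle here: the only thing worth emphasizing is the \emph{reason} the proof collapses, namely that $f$ acts as the identity on the coordinates outside $J$ and as a constant map on the coordinates inside $J$, and both operations are trivially continuous, so the $1$-Lipschitz bound is automatic.
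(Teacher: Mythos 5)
Your proposal is correct and follows essentially the same route as the paper's proof: observe that coordinates in $J$ contribute zero to the difference, conclude $\|f(\mathbf{A}) - f(\mathbf{B})\| \leq \|\mathbf{A} - \mathbf{B}\|$, and take $\delta = \epsilon$. The only cosmetic issue is reusing the letter $C$ for the Lipschitz constant when the theorem already uses $C$ for the pad value.
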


\begin{proof}
Consider two vectors $\mathbf{A}, \mathbf{B} \in \mathbb{R}^n$:
\[
\mathbf{A} = (a_1, a_2, \ldots, a_n), \quad \mathbf{B} = (b_1, b_2, \ldots, b_n).
\]

The mapping $f$ replaces each element of \(\mathbf{A}\) at the positions indexed by \(J\) with the constant \(C\), and leaves the other elements unchanged:
\[
f(\mathbf{A}) = (c_1, c_2, \ldots, c_n),
\]
where
\[
c_i = \begin{cases}
C & \text{if } i \in J,\\
a_i & \text{otherwise}.
\end{cases}
\]

Similarly,
\[
f(\mathbf{B}) = (d_1, d_2, \ldots, d_n),
\]
where
\[
d_i = \begin{cases}
C & \text{if } i \in J,\\
b_i & \text{otherwise}.
\end{cases}
\]

The distance between the images of \(\mathbf{A}\) and \(\mathbf{B}\) under \(f\) is given by
\[
\|f(\mathbf{A}) - f(\mathbf{B})\| = \sqrt{\sum_{i=1}^n (c_i - d_i)^2}.
\]

By the definition of \(f\), we have
\[
c_i - d_i = \begin{cases}
0 & \text{if } i \in J,\\
a_i - b_i & \text{otherwise}.
\end{cases}
\]

Therefore, the distance can be rewritten as
\[
\|f(\mathbf{A}) - f(\mathbf{B})\| = \sqrt{\sum_{i \notin J} (a_i - b_i)^2}.
\]

Note that the sum is only over the indices not in \(J\). This is because the elements in \(J\) are replaced by the constant \(C\), and thus their difference is zero.

Since
\[
\|f(\mathbf{A}) - f(\mathbf{B})\| \leq \|\mathbf{A} - \mathbf{B}\|,
\]
for any \(\epsilon > 0\), choose \(\delta = \epsilon\). If \(\|\mathbf{A} - \mathbf{B}\| < \delta\), then
\[
\|f(\mathbf{A}) - f(\mathbf{B})\| < \epsilon.
\]

Therefore, \(f\) is continuous.
\end{proof}

\begin{theorem}[Swap Operation Continuity]
Let $\mathbf{A} = (a_1, a_2, \ldots, a_n) \in \mathbb{R}^n$. Define a mapping $f: \mathbb{R}^n \to \mathbb{R}^n$ such that for fixed disjoint subsets $I, J \subseteq \{1, 2, \ldots, n\}$ with $I \cap J = \emptyset$ and $|I| = |J|$, the elements of $\mathbf{A}$ indexed by $I$ are swapped with the elements indexed by $J$. Then $f$ is a continuous function.
\end{theorem}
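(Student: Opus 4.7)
The plan is to recognize the swap operation as a coordinate permutation, which is an isometry of $\mathbb{R}^n$ under the Euclidean norm, and then to extract Lipschitz continuity (with constant $1$) as an immediate consequence. This will make the proof structurally parallel to the four preceding theorems, while being even tighter since no Cauchy--Schwarz or triangle-inequality slack is needed.

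First I would introduce notation: fix an enumeration $I = \{i_1, \dots, i_m\}$ and $J = \{j_1, \dots, j_m\}$ (where $m = |I| = |J|$), and define the permutation $\sigma : \{1, \dots, n\} \to \{1, \dots, n\}$ by $\sigma(i_k) = j_k$, $\sigma(j_k) = i_k$ for $k = 1, \dots, m$, and $\sigma(\ell) = \ell$ for $\ell \notin I \cup J$. Disjointness of $I$ and $J$ guarantees that $\sigma$ is well defined, and equal cardinality guarantees it is a bijection. With this, the swap operation acts as $f(\mathbf{A})_i = a_{\sigma(i)}$, matching the component-wise definition used in the earlier theorems.

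Next I would write, for any two vectors $\mathbf{A}, \mathbf{B} \in \mathbb{R}^n$,
\[
\|f(\mathbf{A}) - f(\mathbf{B})\|^2 = \sum_{i=1}^{n} (a_{\sigma(i)} - b_{\sigma(i)})^2 = \sum_{j=1}^{n} (a_j - b_j)^2 = \|\mathbf{A} - \mathbf{B}\|^2,
\]
where the middle equality is the reindexing $j = \sigma(i)$, valid because $\sigma$ is a bijection on $\{1, \dots, n\}$. This yields $\|f(\mathbf{A}) - f(\mathbf{B})\| = \|\mathbf{A} - \mathbf{B}\|$, and the standard $\epsilon$--$\delta$ conclusion (take $\delta = \epsilon$) finishes the argument in the style of the previous proofs.

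There is no real obstacle in this proof; the only subtle point is making the permutation $\sigma$ unambiguous. The hypotheses $I \cap J = \emptyset$ and $|I| = |J|$ are exactly what is needed: disjointness ensures $\sigma$ is well defined on overlapping indices, and equal cardinality lets us pair up the elements of $I$ and $J$ so that $\sigma$ is an involution rather than an ill-defined partial map. Once $\sigma$ is recognized as a bijection, continuity is automatic because coordinate permutations are isometries — in fact the proof shows the stronger statement that $f$ is $1$-Lipschitz.
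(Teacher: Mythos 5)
Your proof is correct and takes essentially the same approach as the paper's: both establish that the swap is a norm-preserving rearrangement of coordinates, i.e.\ $\|f(\mathbf{A}) - f(\mathbf{B})\| = \|\mathbf{A} - \mathbf{B}\|$, and conclude with $\delta = \epsilon$. Your explicit introduction of the permutation $\sigma$ and the reindexing argument is a slightly cleaner packaging of the paper's case-by-case component computation, but the underlying idea is identical.
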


\begin{proof}
Let $\mathbf{A} = (a_1, a_2, \ldots, a_n) \in \mathbb{R}^n$, and let $I = \{i_1, i_2, \ldots, i_m\}$ and $J = \{j_1, j_2, \ldots, j_m\}$ be two disjoint subsets of indices with $|I| = |J| = m$. Define the mapping $f$ such that it swaps the elements of $\mathbf{A}$ indexed by $I$ and $J$. Specifically, for any $\mathbf{A}$, the mapping $f$ produces a vector $\mathbf{B} = f(\mathbf{A})$ given by
\[
b_k = \begin{cases}
a_{j_r} & \text{if } k = i_r \text{ for some } r = 1, 2, \ldots, m,\\
a_{i_r} & \text{if } k = j_r \text{ for some } r = 1, 2, \ldots, m,\\
a_k & \text{otherwise}.
\end{cases}
\]

Consider two vectors $\mathbf{A}, \mathbf{C} \in \mathbb{R}^n$:
\[
\mathbf{A} = (a_1, a_2, \ldots, a_n), \quad \mathbf{C} = (c_1, c_2, \ldots, c_n).
\]

Applying the mapping $f$ to both vectors, we obtain
\[
f(\mathbf{A}) = (b_1, b_2, \ldots, b_n), \quad f(\mathbf{C}) = (d_1, d_2, \ldots, d_n),
\]
where
\[
b_k = \begin{cases}
a_{j_r} & \text{if } k = i_r \text{ for some } r = 1, 2, \ldots, m,\\
a_{i_r} & \text{if } k = j_r \text{ for some } r = 1, 2, \ldots, m,\\
a_k & \text{otherwise},
\end{cases}
\]
and similarly,
\[
d_k = \begin{cases}
c_{j_r} & \text{if } k = i_r \text{ for some } r = 1, 2, \ldots, m,\\
c_{i_r} & \text{if } k = j_r \text{ for some } r = 1, 2, \ldots, m,\\
c_k & \text{otherwise}.
\end{cases}
\]

The distance between $f(\mathbf{A})$ and $f(\mathbf{C})$ is given by
\[
\|f(\mathbf{A}) - f(\mathbf{C})\| = \sqrt{\sum_{k=1}^n (b_k - d_k)^2}.
\]

Since $f$ only swaps the elements indexed by $I$ and $J$, we have
\[
b_k - d_k = \begin{cases}
a_{j_r} - c_{j_r} & \text{if } k = i_r \text{ for some } r,\\
a_{i_r} - c_{i_r} & \text{if } k = j_r \text{ for some } r,\\
a_k - c_k & \text{otherwise}.
\end{cases}
\]

Therefore, the norm becomes
\[
\|f(\mathbf{A}) - f(\mathbf{C})\| = \sqrt{\sum_{r=1}^m (a_{j_r} - c_{j_r})^2 + \sum_{r=1}^m (a_{i_r} - c_{i_r})^2 + \sum_{k \notin I \cup J} (a_k - c_k)^2}.
\]

Rearranging the terms, we have
\[
\|f(\mathbf{A}) - f(\mathbf{C})\| = \sqrt{\sum_{k=1}^n (a_k - c_k)^2} = \|\mathbf{A} - \mathbf{C}\|.
\]

Therefore, for any \(\epsilon > 0\), choose \(\delta = \epsilon\). If \(\|\mathbf{A} - \mathbf{C}\| < \delta\), then
\[
\|f(\mathbf{A}) - f(\mathbf{C})\| = \|\mathbf{A} - \mathbf{C}\| < \epsilon.
\]

Hence, \(f\) is continuous.
\end{proof}

\subsection{Comparison with Other Distance Metric} \label{sec:c.2}
We aim to explore whether using different distance metrics to define neighbor facts would also influence the model's inductive reasoning. Therefore, we additionally introduce three other distance metrics: Euclidean distance $d_{euc}$, Manhattan distance $d_{man}$, and Minkowski distance $d_{min}$. Like Equation \ref{eq:6}, we have: 

\begin{align}
    d_{euc}(\sX_i, \vx_t) &= \sqrt{\sum_{k=1}^{D} (\evx_{ik} - \evx_{tk})^2} \\
    d_{man}(\sX_i, \vx_t) &= \sum_{k=1}^{D} |\evx_{ik} - \evx_{tk}| \\
    d_{min}(\sX_i, \vx_t) &= \left( \sum_{k=1}^{D} |\evx_{ik} - \evx_{tk}|^p \right)^{\frac{1}{p}}
\end{align}
where we set $p = 3$. We can generate three distinct new neighborhoods $\mathcal{N}(\vx_t, \epsilon)$ by incorporating these distances into Equation \ref{eq:7}, thereby constructing three new kinds of OF. Therefore, we compare the model's performance on EI tasks when using only these different OFs, and the results are shown in Figure \ref{fig:4.5}. From the figure, we can see that \textbf{our neighborhood construction outperforms those constructed using other distance metrics}. The EI performance of the other three OFs across different radii is similar to the baseline, indicating that removing neighbor facts constructed using these methods does not influence the model's inductive reasoning ability. In contrast, our constructed OF leads to a significant decline in accuracy, proving the validity of our neighborhood construction.
\begin{figure}[h]
    \centering
     \begin{subfigure}[t]{.49\linewidth}
    \centering
\includegraphics[width=\linewidth]{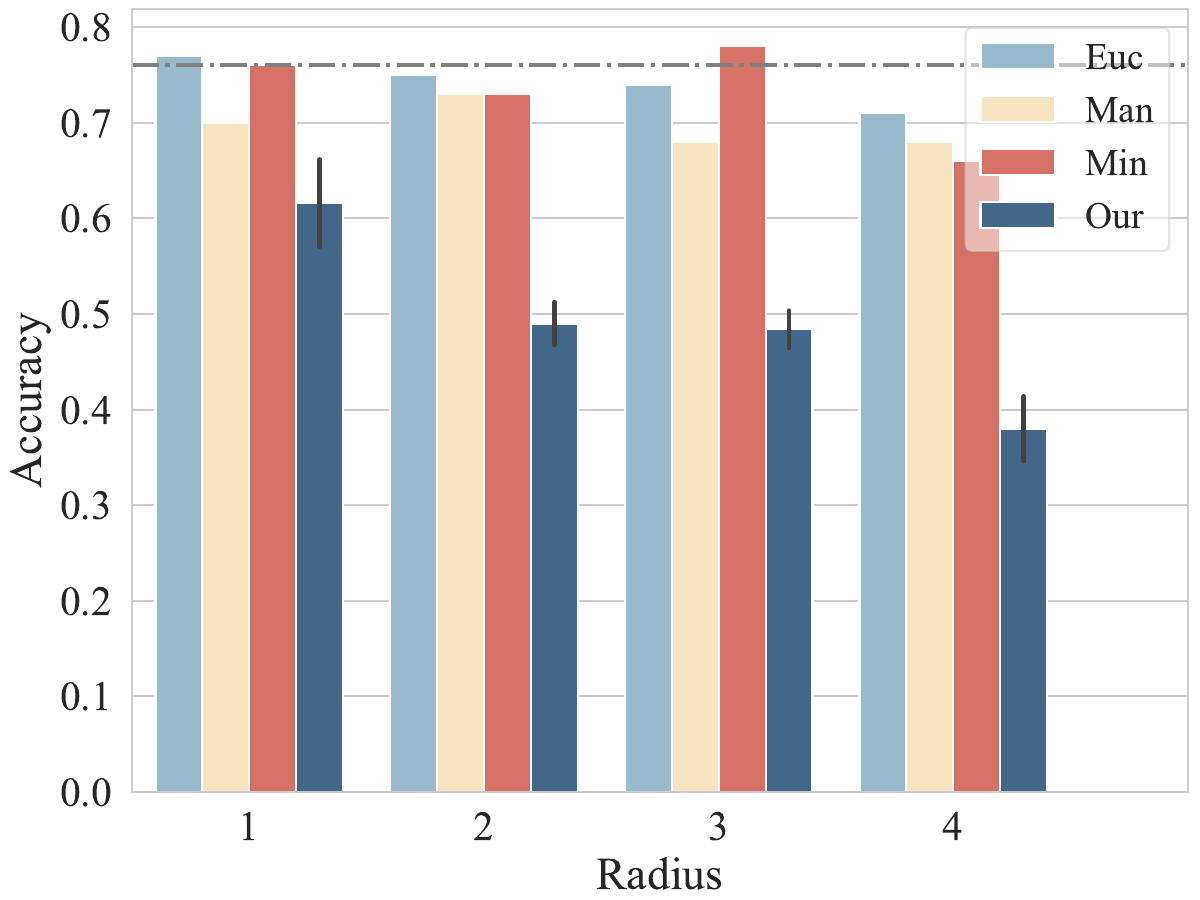}
    \caption{GPT-4o} \label{fig:4.3.1}
\end{subfigure}
\begin{subfigure}[t]{.49\linewidth}
    \centering
\includegraphics[width=\linewidth] {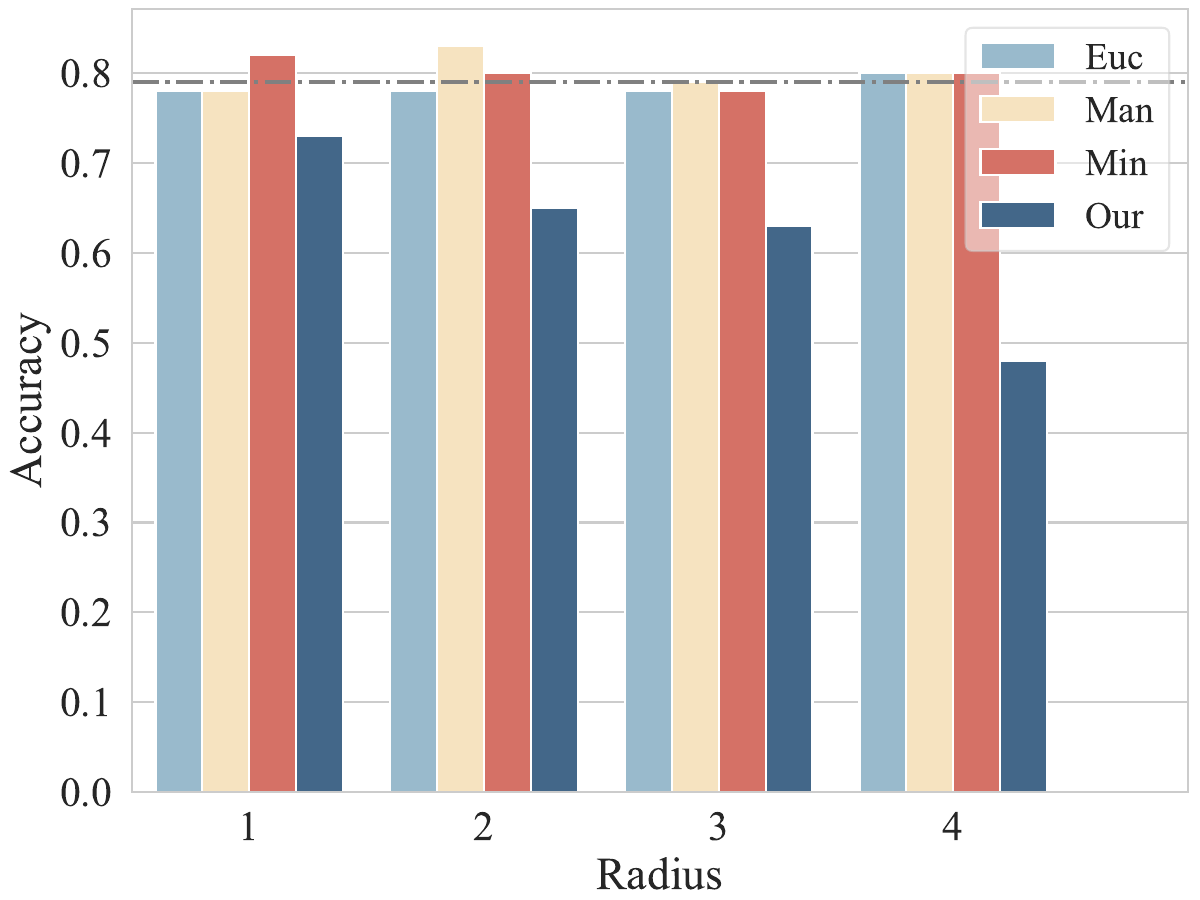}
    \caption{Claude-3.5} \label{fig:4.3.2}
\end{subfigure}
\\
\caption{Performance comparison of the impact of different OFs. The dashed line represents the baseline accuracy using default fact sets. Euc represents Euclidean distance, Man represents Manhattan distance and Min represents Minkowski distance.}
\label{fig:4.5}
\end{figure}

\subsection{More Experiments on other Models} \label{sec:c.3}
We repeat the experiments in $\S$ \ref{sec:4.3} on Llama2-13B, Claude-3.5 and Llama3-8B. The results are shown in Table \ref{tab:c.3.1}, \ref{tab:c.3.2}, \ref{tab:c.3.3}. Besides, we also repeat the experiments in $\S$ \ref{sec:4.4} on Claude-3.5 and report the results in Figure \ref{fig:c.3.3}.  The results of all these additional experiments are consistent with those in the main text.

\subsection{Supplementary Experiment for Main Experiment} 
We observe that, in the experiment of $\S$ \ref{sec:4.2}, though the performance of OF significantly decreases compared to the baseline, some models are still able to maintain around 40\% accuracy, even with only distant observed facts. We infer that models are likely to conduct rule-based reasoning in these cases.
 Hence, we design an extra experiment for supplementary. In it, we prompt LLMs to induct rules and finish example inference tasks (i.e. ID in $\S$\ref{sec:3.2}) on these cases in Table \ref{tab:4.1_sup}. From the table, we can observe that the model's deductive accuracy using the rule exceeds 70\% when there are fewer neighbor facts in the context. This demonstrates that the model tends to rely more on rule-based induction if there is less neighbor-based matching.

\begin{table}[htbp]
\centering
\scalebox{0.9}{ 
\begin{tabular}{lcccccccccccc}
\toprule
\multirow{2}{*}{\textbf{Type}}  & \multicolumn{4}{c}{\textbf{N=3}} & \multicolumn{4}{c}{\textbf{N=5}} & \multicolumn{4}{c}{\textbf{N=8}}\\
\cmidrule(lr){2-5} \cmidrule(lr){6-9} \cmidrule(lr){10-13}
&  LT & RP & CG & ST & LT & RP & CG & ST & LT & RP & CG & ST \\
\midrule
Baseline & 0.18 & 0.05 & 0.14 & 0.30 & 0.15 & 0.05 & 0.17 & 0.23 & 0.14 & 0.00 & 0.17 & 0.31 \\
IF Only &  0.43 & 0.14 & 0.35 & 0.34 & 0.48 & 0.03 & 0.49 & 0.36 & 0.46 & 0.02 & 0.52 & 0.36 \\
CF Only  & 0.17 & 0.06 & 0.14 & 0.28 & 0.15 & 0.04 & 0.22 & 0.25 & 0.14 & 0.00 & 0.17 & 0.31 \\
OF Only & 0.16 & 0.04 & 0.13 & 0.27 & 0.09 & 0.02 & 0.09 & 0.26 & 0.10 & 0.01 & 0.10 & 0.22 \\
\bottomrule
\end{tabular}
}
\caption{Performance of different fact types under various settings on Llama2-13B ($D$ = 5).} \label{tab:c.3.1}
\end{table}

\begin{table}[htbp]
\centering
\scalebox{0.9}{ 
\begin{tabular}{lcccccccccccc}
\toprule
\multirow{2}{*}{\textbf{Type}}  & \multicolumn{4}{c}{\textbf{N=3}} & \multicolumn{4}{c}{\textbf{N=5}} & \multicolumn{4}{c}{\textbf{N=8}}\\
\cmidrule(lr){2-5} \cmidrule(lr){6-9} \cmidrule(lr){10-13}
&  LT & RP & CG & ST & LT & RP & CG & ST & LT & RP & CG & ST \\
\midrule
Baseline & 0.49 & 0.23 & 0.49 & 0.38 & 0.68 & 0.39 & 0.80 & 0.53 & 0.78 & 0.51 & 0.81 & 0.56 \\
IF Only & 0.76 & 0.42 & 0.84 & 0.61 & 0.90 & 0.46 & 0.87 & 0.61 & 0.89 & 0.62 & 0.93 & 0.76 \\
CF Only  & 0.54 & 0.23 & 0.59 & 0.36 & 0.67 & 0.36 & 0.81 & 0.56 & 0.76 & 0.42 & 0.85 & 0.53 \\
OF Only & 0.50 & 0.24 & 0.49 & 0.45 & 0.60 & 0.30 & 0.71 & 0.39 & 0.66 & 0.26 & 0.77 & 0.49 \\
\bottomrule
\end{tabular}
}
\caption{Performance of different fact types under various settings on Claude-3.5 ($D$ = 5).} \label{tab:c.3.2}
\end{table}

\begin{table}[htbp]
\centering
\scalebox{0.9}{ 
\begin{tabular}{lcccccccccccc}
\toprule
\multirow{2}{*}{\textbf{Type}}  & \multicolumn{4}{c}{\textbf{N=3}} & \multicolumn{4}{c}{\textbf{N=5}} & \multicolumn{4}{c}{\textbf{N=8}}\\
\cmidrule(lr){2-5} \cmidrule(lr){6-9} \cmidrule(lr){10-13}
&  LT & RP & CG & ST & LT & RP & CG & ST & LT & RP & CG & ST \\
\midrule
Baseline &  0.19 & 0.12 & 0.26 & 0.29 & 0.27 & 0.17 & 0.28 & 0.32 & 0.24 & 0.26 & 0.37 & 0.20 \\
IF Only & 0.55 & 0.22 & 0.55 & 0.38 & 0.65 & 0.40 & 0.65 & 0.46 & 0.68 & 0.41 & 0.75 & 0.41 \\
CF Only  & 0.21 & 0.07 & 0.29 & 0.29 & 0.27 & 0.14 & 0.31 & 0.31 & 0.27 & 0.22 & 0.39 & 0.19 \\
OF Only & 0.24 & 0.06 & 0.34 & 0.26 & 0.23 & 0.08 & 0.24 & 0.25 & 0.18 & 0.11 & 0.23 & 0.15 \\
\bottomrule
\end{tabular}
}
\caption{Performance of different fact types under various settings on Llama3-8B ($D$ = 5).} \label{tab:c.3.3}
\end{table}

\begin{figure}[htbp]
    \centering
     \begin{subfigure}[t]{.49\linewidth}
    \centering
\includegraphics[width=\linewidth]{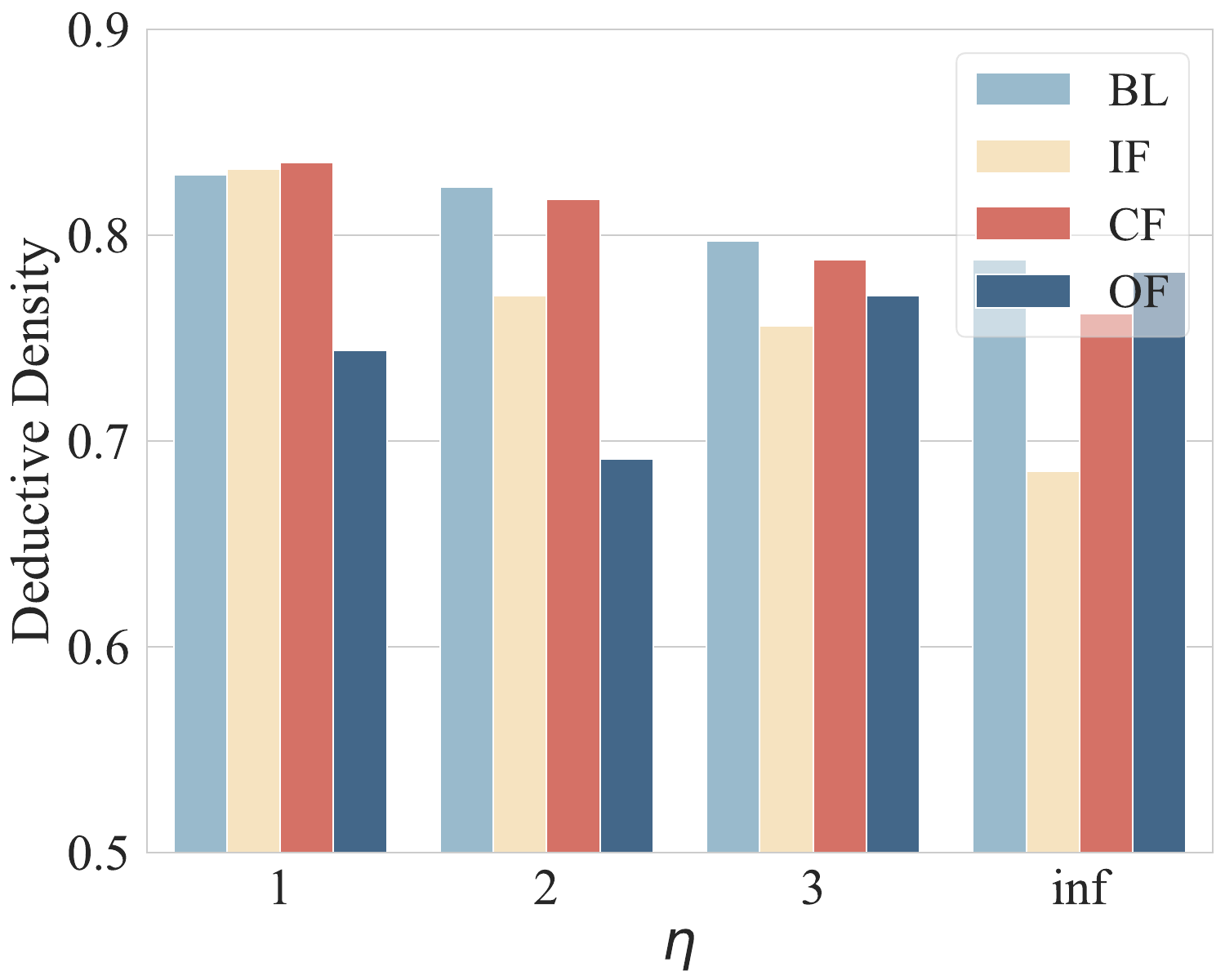}
    \caption{$\epsilon$ = 1}
\end{subfigure}
\begin{subfigure}[t]{.49\linewidth}
    \centering
\includegraphics[width=\linewidth] {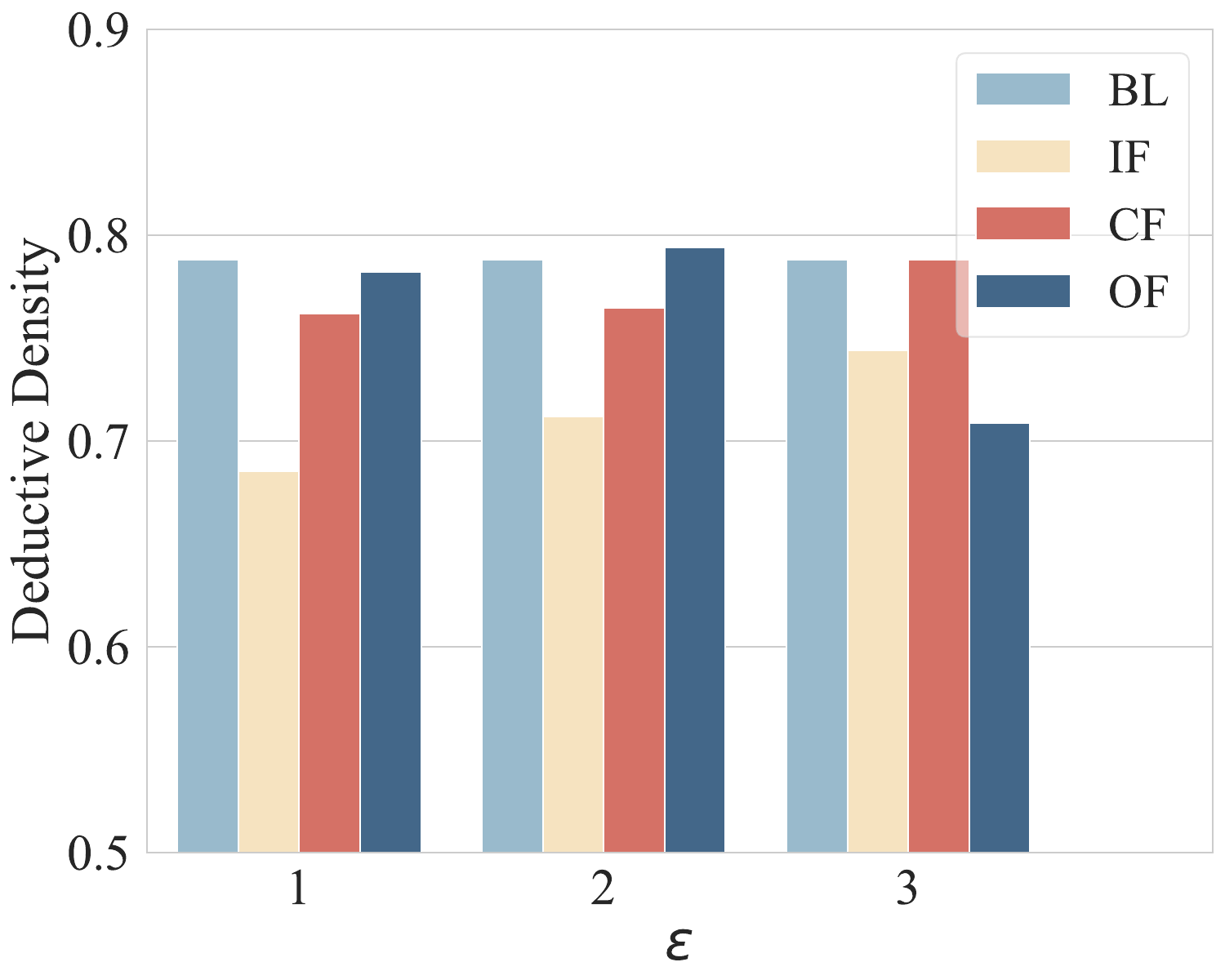}
    \caption{$\eta$ = inf}
\end{subfigure}
\\
\caption{Deductive Density of various fact types on Claude-3.5 under different test radius $\eta$ and
neighborhood radius $\epsilon$ ($D$ = 5, $N$ = 5).}
\label{fig:c.3.3}
\end{figure}

\begin{table}[htbp]
\centering
\begin{tabular}{lcccc}
\toprule
\textbf{Model}  & $\epsilon$=1 & $\epsilon$=2 & $\epsilon$=3 & Avg\\
\midrule
GPT-4o & 0.74 & 0.72 & 0.76 & 0.74 \\
Claude-3.5 & 0.76 & 0.72 & 0.73 & 0.74\\
\bottomrule
\end{tabular}
\captionof{table}{Performance on the correct case of OF. We use the 0-shot setting and vary the radius $\epsilon$.} \label{tab:4.1_sup}

\end{table}

\begin{table}[htbp]
\renewcommand{\arraystretch}{1.5}
 
    \centering
    \scalebox{0.9}{
    \begin{tabular}{cl}  
    \toprule
    \textbf{Scenario} &  \textbf{Prompt} \\
          \midrule
   \multirow{7}{*}{\textbf{LT}} & Please summarize the rules of the list transformation based on the given facts.  \\
    & Your reply should strictly follow the following format: \\
    & Rule: [A, B, C] $\rightarrow$ [\texttt{<<}expression\texttt{>>}, \texttt{<<}expression\texttt{>>}, \texttt{<<}expression\texttt{>>}] \\
    &Fact 1: Input: \{INPUT\}   \quad     Output: \{OUTPUT\} \\
    & ... \\
   & Fact n: Input: \{INPUT\}    \quad     Output: \{OUTPUT\} \\
   & Please generate the rule of list transformation based on the former facts. \\
    \midrule
    \multirow{8}{*}{\textbf{RP}} & Please summarize the rules of the \{TASK\_TYPE\} based on the given facts.  \\
&Your reply should strictly follow the following format: \\
&Rule: If there are A \{OBJ1\}, B \{OBJ2\}, C \{OBJ3\}. After the \{TASK\_TYPE\}, \\ & there are \texttt{<<}expression\texttt{>>} \{OBJ1\}, \texttt{<<}expression\texttt{>>} \{OBJ2\}, \texttt{<<}expression\texttt{>>} \{OBJ3\}. \\
    &Fact 1: Input: \{INPUT\}   \quad     Output: \{OUTPUT\} \\
    & ... \\
   & Fact n: Input: \{INPUT\}    \quad     Output: \{OUTPUT\} \\
   & Please generate the rule of \{TASK\_TYPE\} based on the former facts. \\
    \midrule
    \multirow{10}{*}{\textbf{CG}} & Please summarize the rules of the function based on the given facts.  \\
& Your reply should strictly follow the following format: \\
 & Rule:  \\
& def f(A, B, C): \\
   & \quad \quad A, B, C = \texttt{<<}expression\texttt{>>}, \texttt{<<}expression\texttt{>>}, \texttt{<<}expression\texttt{>>} \\
   & \quad \quad return A, B, C \\
    &Fact 1: Input: \{INPUT\}   \quad     Output: \{OUTPUT\} \\
    & ... \\
   & Fact n: Input: \{INPUT\}    \quad     Output: \{OUTPUT\} \\
   & Please generate the rule of function based on the former facts. \\
    \midrule
    \multirow{7}{*}{\textbf{ST}} & Please summarize the rules of the string transformation based on the given facts.  \\
& Your reply should strictly follow the following format: \\
& Rule: ABC $\rightarrow$ ... \\
    &Fact 1: Input: \{INPUT\}   \quad     Output: \{OUTPUT\} \\
    & ... \\
   & Fact n: Input: \{INPUT\}    \quad     Output: \{OUTPUT\} \\
   & Please generate the rule of string transformation based on the former facts. \\
    \bottomrule
    \end{tabular}}
    \caption{Prompts for rule induction tasks ($D$ = 3).}
    \label{tab:a.3_1}
\end{table}

\begin{table}[htbp]
\renewcommand{\arraystretch}{1.5}
    \centering
    \scalebox{0.9}{
    \begin{tabular}{cl}  
    \toprule
    \textbf{Scenario} &  \textbf{Prompt} \\
          \midrule
   \multirow{7}{*}{\textbf{LT}} & Please answer the question based on rules of the list transformation in the given facts.  \\
    & Your reply should strictly follow the following format: \\
    & Answer:  [\texttt{<<}expression\texttt{>>}, \texttt{<<}expression\texttt{>>}, \texttt{<<}expression\texttt{>>}] \\
    &Fact 1: Input: \{INPUT\}   \quad     Output: \{OUTPUT\} \\
    & ... \\
   & Fact n: Input: \{INPUT\}    \quad     Output: \{OUTPUT\} \\
   & Question: Input: \{TEST\_INPUT\} \\
    \midrule
    \multirow{7}{*}{\textbf{RP}} & Please answer the question based on rules of the \{TASK\_TYPE\} in the given facts.\\
&Your reply should strictly follow the following format: \\
&Answer: \texttt{<<}expression\texttt{>>} \{OBJ1\}, \texttt{<<}expression\texttt{>>} \{OBJ2\}, \texttt{<<}expression\texttt{>>} \{OBJ3\}. \\
    &Fact 1: Input: \{INPUT\}   \quad     Output: \{OUTPUT\} \\
    & ... \\
   & Fact n: Input: \{INPUT\}    \quad     Output: \{OUTPUT\} \\
   & Question: Input: \{TEST\_INPUT\} \\
    \midrule
    \multirow{7}{*}{\textbf{CG}} & Please answer the question based on rules of the functioon in the given facts.  \\
& Your reply should strictly follow the following format: \\
 & Answer: \texttt{<<}expression\texttt{>>}, \texttt{<<}expression\texttt{>>}, \texttt{<<}expression\texttt{>>} \\
    &Fact 1: Input: \{INPUT\}   \quad     Output: \{OUTPUT\} \\
    & ... \\
   & Fact n: Input: \{INPUT\}    \quad     Output: \{OUTPUT\} \\
   & Question: Input: \{TEST\_INPUT\}  \\
    \midrule
    \multirow{7}{*}{\textbf{ST}} & Please answer the question based on rules of the string transformation in the given facts.  \\
& Your reply should strictly follow the following format: \\
& Answer: ... \\
    &Fact 1: Input: \{INPUT\}   \quad     Output: \{OUTPUT\} \\
    & ... \\
   & Fact n: Input: \{INPUT\}    \quad     Output: \{OUTPUT\} \\
   & Question: Input: \{TEST\_INPUT\} \\
    \bottomrule
    
    \end{tabular}}
    \caption{Prompts for example inference tasks ($D$ = 3).}
    \label{tab:a.3_2}
\end{table}

\begin{table}[htbp]
\renewcommand{\arraystretch}{1.5}
    \centering
    \scalebox{0.8}{
    \begin{tabular}{cll}  
    \toprule
    \textbf{Scenario} &  \textbf{Template} & \textbf{Objects}\\
          \midrule
   \multirow{3}{*}{\textbf{Trade}}  & \{NAME\} went to the market to trade items based on the rule. & \multirow{3}{*}{chairs, tables, pens ...} \\ &   He originally had \{obj\_expression\} \\ &   After the trade, he had \{obj\_expression\}  \\
    \midrule
    \multirow{3}{*}{\textbf{Diet}} & \{NAME\} adjusted his diet plan according to the expert's advice. & \multirow{3}{*}{ apples, bananas, oranges ...} \\ &  He originally planned to take in \{obj\_expression\} & \\ & After the adjustment, he had \{obj\_expression\}\\
    \midrule
    \multirow{3}{*}{\textbf{Magic}} & 
\{NAME\} was performing a card magic trick. & \multirow{3}{*}{ Spade 5s, Jokers, Hearts 6s ...} \\ & Initially, he had \{obj\_expression\} & \\ & After performing the magic, he ended up with \{obj\_expression\} & \\
    \midrule
     \multirow{3}{*}{\textbf{Invest}} & 
\{NAME\} adjusted the investment amount of each asset based on criteria. & \multirow{3}{*}{ stocks, bonds, funds ...} \\ & Initially, he invested \{obj\_expression\} & \\ & After the adjustment, he invested  \{obj\_expression\} & \\
\midrule
 \multirow{3}{*}{\textbf{Course}} & 
\{NAME\} adjusted the students' courses according to certain rules. & \multirow{3}{*}{ math, science, history ...} \\ & Initially, the weekly course schedule was: \{obj\_expression\} & \\ & After the adjustment, the weekly course schedule was: \{obj\_expression\} & \\
    \bottomrule
    
    \end{tabular}}
    \caption{Prompts for real-world problems construction.}
    \label{tab:a.3_3}
\end{table}
\end{document}